\documentclass[10pt,twocolumn,letterpaper]{article}

\usepackage[pagenumbers]{iccv}
\usepackage{bm}
\usepackage{amsmath,amsthm}
\usepackage{multirow}
\usepackage{array}
\usepackage{tabularray}
\usepackage{subcaption}
\UseTblrLibrary{amsmath}
\usepackage{graphbox}
\definecolor{blue}{rgb}{0.21,0.49,0.74}
\usepackage[pagebackref,breaklinks,colorlinks,allcolors=blue]{hyperref}

\def\w{\bm \omega}
\def\Tr{\operatorname{tr}}
\def\diag{\operatorname{diag}}
\def\ch{\operatorname{conv}}
\newcolumntype{H}{>{\setbox0=\hbox\bgroup}c<{\egroup}@{}}
\def\m#1{\ensuremath{\mathtt{#1}}}
\def\v#1{\ensuremath{\mathbf{#1}}}
\def\mI{{\m I}}

\def\tr{^\top}

\providecommand{\Paren}[1]{\ensuremath{\left( #1 \right)}}
\providecommand{\dotprod}[2]{\ensuremath{\left\langle #1, #2 \right\rangle}}
\DeclareMathOperator{\vect}{vec}

\def\ind{\hspace*{1em}}
\newtheorem{lemma}{Lemma}
\newtheorem{corollary}{Corollary}
\newtheorem{theorem}{Theorem}

\title{Certifiably Optimal Anisotropic Rotation Averaging}
\author{Carl Olsson$^{1}$
\and
Yaroslava Lochman$^{2}$
\and
Johan Malmport$^{1}$
\and Christopher Zach$^{2}$
\and
\vspace{-8pt}
\\
$^{1}$Lund University
\and
\vspace{-8pt}
\\
$^{2}$Chalmers University of Technology
}

\begin{document}
\maketitle

\begin{abstract}
    Rotation averaging is a key subproblem in applications of computer vision and robotics. Many methods for solving this problem exist, and there are also several theoretical results analyzing difficulty and optimality. However, one aspect that most of these have in common is a focus on the isotropic setting, where the intrinsic uncertainties in the measurements are not fully incorporated into the resulting optimization task. Recent empirical results suggest that moving to an anisotropic framework, where these uncertainties are explicitly included, can result in an improvement of solution quality. However, global optimization for rotation averaging has remained a challenge in this scenario.
    In this work we show how anisotropic costs can be incorporated in certifiably optimal rotation averaging. We also demonstrate how existing solvers, designed for isotropic situations, fail in the anisotropic setting. Finally, we propose a stronger relaxation and empirically show that it recovers global optima in all tested datasets and leads to more accurate reconstructions in almost all scenes.\footnote{This work was supported by the Wallenberg Artificial Intelligence, Autonomous Systems and Software Program (WASP), funded by the Knut and Alice Wallenberg Foundation, and the Swedish Research Council (\#2023-05341).}
\end{abstract}

\section{Introduction}

Rotation averaging has been a topic of interest now for some time, dating back more than two decades, with early work such as \cite{govindu2004}. Govindu's groundbreaking work has since been followed up by many others and the study of rotation averaging is now a very important area.

In large part this is because of the great importance it has for many problems within computer vision and robotics. Traditionally, it has been a core component of non-sequential structure from motion (SfM), see e.g. \cite{moulon2012, moulon2017, pan2024}. The key strength here is that rotation averaging optimizes the orientations of all the cameras at the same time, allowing recovery methods to avoid incrementally increasing errors (drift) \cite{cornelis2004} that easily occur in sequential SfM methods. More recently, \cite{bustos2019,Carlone2015,Carlone2015b} point out the importance of rotation averaging to improve the accuracy and speed of simultaneous localization and mapping (SLAM).

The main difficulty in solving the rotation averaging problem is that rotations reside in a nonlinear manifold $SO(3)$. Some approaches \cite{dai2009, hartley2013, hartley2011, chatterjee2017, Horowitz2014, chitturi2021} address this challenge by using local optimization over $SO(3)^n$ via explicit parametrizations. These, and other methods of local optimization all work by moving along descent directions, which may lead to a global optima in practical and sufficiently ``nice'' situations.
Wilson et al.~\cite{wilson2016,wilson2020} study when this occurs, and when local optimization is ``hard'' due to the existence of ``bad'' local minima. They conclude that this depends on both the measurement noise and on the algebraic connectivity of the graph representing the problem.

More recent methods for solving rotation averaging problems formulate a non-convex quadratically constrained quadratic problem (QCQP) and relax it to a semidefinite program (SDP). The quadratic constraints force the desired matrices to be orthogonal. The non-quadratic determinant constraint is often dropped, effectively replacing optimization over $SO(3)$ with one over $O(3)$ \cite{briales2017, dellaert2020, parra2021, Moreira2021, rosen2019, chen2021}.  The SDP formulation \emph{is} a relaxation of the QCQP and results in a strictly less or equal optimal value than the QCQP. There are, however, several theoretical results \cite{eriksson2021, hartley2013, rosen2019} that point to this ``gap'' often being non-existent, giving a tight lower bound allowing the recovery of a ``certifiably correct'' solution. In~\cite{eriksson2021} an explicit bound depending on the measurement noise and algebraic connectivity of the graph is given, in analogy to the local optimization setting.

While the above methods and algorithms are often certifiably correct, they minimize the isotropic chordal distance and, therefore, ignore the relative rotation uncertainties. The anisotropic version of the chordal distance allows to explicitly include uncertainties obtained by local two-view optimization into the rotation averaging framework. Recent empirical results such as \cite{Zhang2023} suggest that accounting for uncertainties in the optimization could ``largely improve reconstruction quality''. Therefore the isotropic assumption discussed above can be seen as a disadvantage. 

In this paper, we develop a certifiably optimal SDP-formulation able to optimize the anisotropic chordal distance. In light of the above, this is an improvement of many earlier results. However, global optimization will become more challenging in this case, as---based on our analysis---a direct modification of the objective function results in an SDP that rarely provides a tight lower bound. The main reason for this is, that optimization of an anisotropic cost over $O(3)$ typically results in a solution outside $SO(3)$, implying that the determinant constraint on rotation matrices cannot be dropped in order to expect tight semidefinite relaxations.
As mitigation we present a new relaxation that is capable of further constraining the solution to the convex hull of $SO(3)$, $\operatorname{conv}(SO(3))$. 

Further, we empirically verify---on a number of synthetic and real datasets---that this new relaxation is sufficient to recover the globally optimal solution, and that our anisotropic model generally achieves solutions that are more accurate than the standard isotropic chordal penalty. In summary, our main contributions are: 
\begin{itemize}
    \item We show how anisotropic costs can be incorporated in certifiably correct rotation averaging.
    \item We provide an analysis of the objective function that explains why regular solvers only enforcing $O(3)$ membership usually fail in the anisotropic setting.
    \item We present a stronger convex relaxation able to enforce $\operatorname{conv}(SO(3))$ membership and verify empirically, that the proposed formulation is able to recover a global optimum in all tested instances. To our knowledge, this is the first formulation to yield tight relaxations.
\end{itemize}

\section{Related Work}
There are many ways of formulating the rotation averaging problem and representing rotations; \cite{briales2017b, olsson2008} represent the rotations using vectors and optimize over them, while~\cite{fredriksson2012} uses quaternions. Quaternions have some advantages, but the fact that unit quaternions form a double cover of $SO(3)$ means that quaternion distances are less straightforward and
a second optimization step is needed to determine their appropriate signs.
In this paper we will only consider matrix representations. 

The works \cite{barfoot2024, holmes2024} also consider certifiable optimization of the anisotropic chordal distance. However, their SDP relaxation is based on the Cayley mapping and the relation to our formulation is somewhat unclear. In addition,~\cite{barfoot2024, holmes2024} have to introduce redundant constraints in order to get a ``reasonably tight'' SDP relaxation. It is likely, in the light of our analysis in this paper, and the discussion in \cite{briales2017b} that this is a result of the authors only enforcing $O(3)$ membership for their optimization.

It is also worth mentioning that---while this paper looks at the anisotropic $L_2$ chordal distance---it is possible to consider other distances between rotations as well as their robust counterparts. E.g.,~\cite{wilson2020, wilson2016} consider the geodesic distance.
Further, in~\cite{chatterjee2017, chitturi2021, wang2013} the authors analyze and look at different (robustified) distances in order to obtain a ``robust'' formulation (resulting in the solution being less susceptible to outliers). Outliers are handled via the introduction of auxiliary variables in the SDP formulation in~\cite{rosen2021, carlone2023}. 

From an algorithmic point of view, the SDP relaxation can be solved by general-purpose solvers, e.g.~\cite{mosek}. There are also several dedicated solvers, constructed specifically for the rotation averaging problem. \cite{eriksson2021,parra2021} use coordinate descent (fixing all but one row/column and minimizing), \cite{Moreira2021} use a primal-dual update rule, and in~\cite{chen2021, Yang2022} the authors combine a global coordinate descent approach with a local step. An early approach~\cite{singer2011} looks at eigenvectors of a specific matrix. A particularly powerful method used in~\cite{briales2017, dellaert2020} is that of the Riemannian staircase, where one optimizes using block matrices of a certain dimension and increases (``climb the staircase'') the dimension until a global optimum is found. The authors show that such a method can be made highly effective.

Like most work on rotation averaging, we focus on point estimates and assume unimodal Gaussian noise in the observations. If the input data has multiple modes (e.g.\ due to perceptual aliasing), the particle-based algorithm proposed in~\cite{birdal2020synchronizing} provides more expressive marginals.
Perceptual aliasing can also be explicitly addressed by leveraging cycle consistency constraints (e.g.~\cite{zach2010loopconstraints,lerman2022robust,shi2022robust}).

\section{Certifiably Optimal Rotation Averaging}
In the context of structure from motion and SLAM, the goal of rotation averaging is to estimate absolute camera orientations from relative rotation measurements between pairs of calibrated cameras \cite{govindu2004,hartley2013,fredriksson2012,singer2011}. If $R_i$ and $R_j$ are rotation matrices, encoding the orientation of cameras $P_i = \begin{bmatrix} R_i & t_i
	\end{bmatrix}$ and $P_j = \begin{bmatrix} R_j & t_j
	\end{bmatrix}$, the relative rotation $R_{ij}$ between the two cameras, that is, $P_i$'s orientation in the camera coordinate system of $P_j$ is $R_{ij} = R_i R_j\tr$.
Given estimates $\tilde{R}_{ij}$ of $R_{ij}$, typically obtained by solving two view relative pose \cite{nister2004}, the traditional rotation averaging formulation \cite{eriksson2021,dellaert2020,parra2021} accounts for estimation error using the chordal distance $\|\tilde{R}_{ij}-R_i R_j\tr\|_F$ and aims to solve
\begin{equation}
	\min_{R_i \in SO(3)} \sum_{i \neq j}\|\tilde{R}_{ij} - R_i R_j\tr\|_F^2.
\end{equation}

The derivation of a strong convex relaxation relies on two observations:
Firstly, since the set of rotation matrices have constant (Frobenius) norm, the objective function can be replaced by
$-\sum_{i \neq j} \langle \tilde{R}_{ij},R_j R_i\tr \rangle$. 
If we let $\mathbf{R} = \begin{bmatrix} R_1\tr & R_2\tr & \hdots \end{bmatrix}\tr$ and $\mathbf{\tilde{R}}$ be the matrix containing the blocks $\tilde{R}_{ij}$ (and zeros where no relative rotation estimate is available) we can simplify the objective to $-\langle \mathbf{\tilde{R}}, \mathbf{R} \mathbf{R}\tr \rangle$, which is linear in the relative rotations. 
Secondly, by orthogonality we have $R_i R_i\tr = \mI$, resulting in the Quadratically Constrained Quadratic Program (QCQP)
\begin{align}
	p^* :=  \min_{\mathbf R}\, & -\langle \mathbf{\tilde{R}},\mathbf{R} \mathbf{R}\tr \rangle 
	 \quad \text{s.t. } \forall i: R_i R_i\tr = \mI.
\label{eq:primalobj}
\end{align}
QCQP are typically non-convex but can be relaxed to convex linear semidefinite programs (SDP) by replacing $\mathbf{R}\mathbf{R}\tr$ with a positive semidefinite matrix $\mathbf X$ (technically by taking the Lagrange dual twice), yielding the relaxation
\begin{align}
	d_2^* :=  \min_{\mathbf{X} \succeq 0} & \, -\langle \mathbf{\tilde{R}},\mathbf{X} \rangle 
	\quad \text{s.t. } \forall i: X_{ii} = \mI.
    \tag{\textsc{SDP-O(3)-iso}}
    \label{eq:d2obj}
\end{align}
The above program is convex and can therefore be solved reliably using general purpose \cite{mosek,Lofberg2004} or specialized solvers \cite{dellaert2020,briales2017,eriksson2021}.
The difference $p^* - d_2^*$ between objective values is referred to as the duality gap.
It is clear that we can construct $\mathbf{X}=\mathbf R \mathbf R\tr$ from a solution $\mathbf R$ of~\eqref{eq:primalobj}, which is feasible in \eqref{eq:d2obj}
and therefore $p^* \geq d_2^*$. 
Moreover, if the solution $\mathbf X$ to \eqref{eq:d2obj} has $\operatorname{rank}(\mathbf X)=3$ then it can be factorized into $\mathbf X=\mathbf R \mathbf R\tr$, with $\mathbf R$ being feasible in \eqref{eq:primalobj}, which yields $p^*=d_2^*$ and certifies that $\mathbf R$ is globally optimal in \eqref{eq:primalobj}. 
Thus, a solution of the original non-convex QCQP is obtained via a convex SDP program.

We remark that the use of a linear objective is important to obtain a strong relaxation. Linear objectives admit solutions at extreme points (on the boundary) of the feasible set which are often of low rank.
It has been shown, both theoretically and empirically, \cite{eriksson2021,rosen2021,singer2011,Carlone2015b,rosen2019} that the duality gap of synchronization problems is often zero under various bounded noise regimes. In \cite{eriksson2021}, explicit bounds depending on the algebraic connectivity of the camera graph are given.
For example, for fully connected graphs the duality gap can be shown to be zero if there is a solution with an angular error of at most $42.9^\circ$---which is a rather generous measurement error.

\section{Anisotropic Rotation Averaging}
While the use of the chordal distance often results in tight SDP relaxations, a downside is that it does not take into account the uncertainty of the relative rotation estimates $\tilde{R}_{ij}$. 
For two-view relative pose problems, it is frequently the case that the reprojection errors are much less affected by rotations in certain directions than others. 
In Figure~\ref{fig:door} we illustrate this on a real two-view problem. The left image shows the ground truth solution (obtained after bundle adjustment). To generate the graphs to the right, we sampled rotations of the second camera around its x-,y-, and z-axes between $-5$ and $5$ degrees. For each sampled rotation, we found the best 3D point locations and position of the second camera, and computed the sum-of-squared (calibrated) reprojection errors (solid curves). The result shows that rotations around the black/upward axis affects the reprojection errors significantly less than the others, resulting in a direction of larger uncertainty. By performing sensitivity analysis (i.e.\ by looking at the singular values of the Jacobian) one can arrive at the similar conclusion.
It is therefore desirable to propagate this information to the rotation averaging stage to increase the accuracy of the model. The dashed curves (that are almost indistinguishable from the solid ones) show the approximations, presented below, that can be used for this purpose. In the supplementary material, we also show how isotropic rotation averaging gets negatively affected by the single noisy relative rotation (while the proposed method presented further remains unaffected).
\begin{figure}[htb]
	\begin{center}
		\includegraphics[height=34mm]{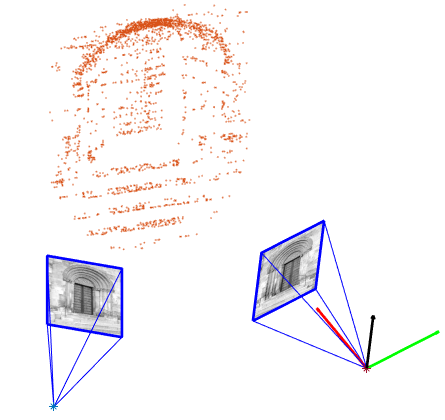}
		\includegraphics[height=34mm]{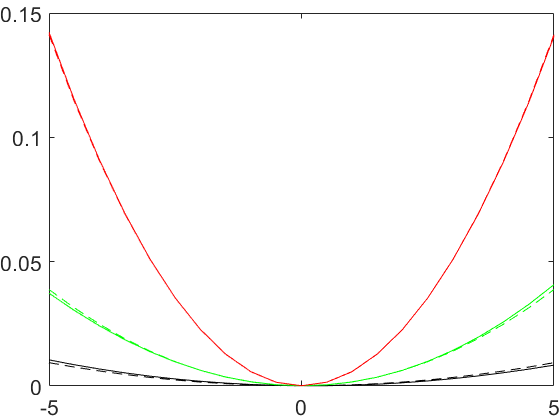}
	\end{center}\vspace{-3mm}
	\caption{Ground truth solution to two view problem (left). Sum-of-squared reprojection errors (right, solid curves) obtained when rotating the second camera around the black, green and red axes (from -5 to 5 degrees) and anisotropic quadratic approximations (dashed curves) that can be used in rotation averaging.\vspace{-3mm}}
	\label{fig:door}
\end{figure}

\subsection{Incorporating uncertainties}\label{sec:aniso_err}
Consider a relative rotation $\tilde{R}_{ij}$ between cameras $i$ and $j$, that has been fully optimized using e.g.\ a Gauss-Newton method for two-view optimization.
Using the exponential map we write $R_{ij}=e^{[\Delta \w_{ij}]_\times}\tilde{R}_{ij}$\footnote{Other parameterizations are discussed in the supplementary material.}, where $\Delta \w_{ij}$ is an axis-angle representation of the deviation from $\tilde{R}_{ij}$, i.e.\ the axis-angle of $R_{ij} \tilde{R}_{ij}\tr$.
Since $\tilde{R}_{ij}$ is a minimum, gradient terms vanish and the two-view objective can locally be approximated by a quadratic function (up to an irrelevant constant)
\begin{equation}\label{eq:quadform0}
\tfrac{1}{2}\Delta \w_{ij}\tr H_{ij} \Delta\w_{ij}.
\end{equation}
We assume that $H_{ij}$ is positive semidefinite---usually given by a Gauss-Newton approximation $H_{ij}=J_{ij}\tr J_{ij}$, where $J_{ij}$ is the Jacobian of the underlying residual function.
We remark that the two-view objective may depend on additional parameters such as camera positions and 3D point locations. Here we assume that these ``nuisance'' parameters have been marginalized out (using the Schur complement), and therefore we work with a reduced objective solely in terms of the rotation parameters $\Delta\w_{ij}$.

We assume Laplace's approximation to be valid at least locally in order to quantify the uncertainty of $\Delta\w_{ij}$,\ i.e. identify the precision matrix with the Hessian $H_{ij}$. While in general the uncertainty of a $3 \times 3$ matrix $R_{ij}$ is represented by a $9 \times 9$ covariance/precision matrix, we aim for a linear cost (in terms of rotation matrices) in order to achieve a strong relaxation, which means that the objective is restricted to an inner product between $R_{ij}$ and a constant but input-dependent matrix.
We first determine a matrix $M_{ij}$ such that
\begin{align}
\begin{split}
	\Delta\w_{ij}\tr H_{ij} \Delta\w_{ij} &= \Tr \left( [\Delta\w_{ij}]_\times\tr M_{ij} [\Delta\w_{ij}]_\times \right) \\
    &= -\Tr \left(M_{ij} [\Delta\w_{ij}]_\times^2 \right).
\end{split}
	\label{eq:quadform}
\end{align}
Using the identity $-[\v v]_\times^2 = \v v\tr \v v \mI - \v v \v v\tr$, we obtain that
$H_{ij} = \Tr(M_{ij})\mI-M_{ij}$. Taking the trace on both sides further shows that 
$\Tr(H_{ij}) = 2\Tr(M_{ij})$, resulting in
\begin{equation}
	M_{ij} = \tfrac{\Tr(H_{ij})}{2}\mI - H_{ij}.
\end{equation}
Next, we deduce via the Taylor expansion of the exponential map that
\begin{equation}
[\Delta \w_{ij}]_\times \approx R_{ij}\tilde{R}_{ij}\tr-\mI,
\end{equation}
with equality up to first order terms. This yields the objective corresponding to~\eqref{eq:quadform0} based on~\eqref{eq:quadform},
\begin{equation}
	\tfrac{1}{2}\Tr\big((R_{ij}\tilde{R}_{ij}\tr-\mI)\tr M_{ij} (R_{ij}\tilde{R}_{ij}\tr-\mI)\big).  
\end{equation}
Similarly to the isotropic case, by leveraging the properties of rotation matrices and by omitting constants, this objective can be reduced to a linear one\footnote{In the supplementary material we show that we arrive at the same objective using classical first-order uncertainty propagation.},
\begin{equation}
-\Tr\big( M_{ij}\tilde{R}_{ij} R_{ij}\tr \big) = -\langle M_{ij}\tilde{R}_{ij},R_{ij}\rangle.
\label{eq:linobj}
\end{equation}
We observe that this objective can be viewed as a negative log-likelihood of a Langevin distribution $\mathcal{L}(R_{ij},M_{ij})$, e.g. \cite{chiuso2008}, 
with a density function of the type
\begin{equation}\label{eq:rot_distr}
p(\tilde{R}_{ij}) \propto e^{\langle M_{ij}\tilde{R}_{ij},R_{ij}\rangle},
\end{equation}
and that generating rotations $\tilde{R}_{ij}$ according to \eqref{eq:rot_distr} is approximately (up to second order terms) equivalent to generating axis-angle vectors $\Delta\w_{ij}$ from a normal distribution $\mathcal{N}(0,H^{-1}_{ij})$ with density
\begin{equation}
	p(\Delta\w_{ij}) \propto e^{-\frac{1}{2}\Delta \w_{ij}\tr H_{ij} \Delta\w_{ij}}.
\end{equation}
Hence, the above derivations suggest to generalize the rotation averaging approach by replacing the relative estimates $\tilde{R}_{ij}$ with a weighted version $M_{ij}\tilde{R}_{ij}$, where $M_{ij}$ is based on the Hessian $H_{ij}$ of the relative pose problem as outlined above. Therefore, our proposed objective to minimize is
\begin{equation}\label{eq:anisotropic_objective}
 -\sum\nolimits_{i \neq j}  \langle M_{ij} \tilde{R}_{ij}, R_j R_i\tr\rangle  = -\langle \mathbf N, \mathbf R \mathbf R\tr \rangle,
\end{equation}
where $\mathbf N$ is a symmetric block matrix containing the blocks $(M_{ij}\tilde{R}_{ij})\tr$ for $i < j$, $0$ when $i=j$, and $M_{ji}\tilde{R}_{ji}$ for $i > j$.

We conclude this section by remarking that---in contrast to $H_{ij}$---the matrix $M_{ij}$ will generally not be positive semidefinite as the following lemma shows.
\begin{lemma}\label{lemma:eigs}
	If $H_{ij} \succeq 0$ then $M_{ij} = \frac{\Tr (H_{ij})}{2}\mI - H_{ij}$ has eigenvalues $\lambda_1 \geq \lambda_2 \geq |\lambda_3|$.
\end{lemma}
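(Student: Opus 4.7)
The plan is to diagonalize $H_{ij}$ and exploit that $M_{ij}$ is a simple linear function of $H_{ij}$, so the two matrices share eigenvectors and we can read off the spectrum of $M_{ij}$ directly. Let $h_1\geq h_2\geq h_3\geq 0$ denote the eigenvalues of $H_{ij}$, with nonnegativity coming from $H_{ij}\succeq 0$. Then the eigenvalues of $M_{ij}$ are given by $\mu_k = \tfrac{1}{2}(h_1+h_2+h_3) - h_k$ for $k=1,2,3$.

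Next I would sort these $\mu_k$. A one-line computation gives $\mu_3-\mu_2 = h_2 - h_3 \geq 0$ and $\mu_2-\mu_1 = h_1-h_2\geq 0$, so the correct ordering is $\mu_3 \geq \mu_2 \geq \mu_1$. This already identifies the candidates $\lambda_1=\mu_3$, $\lambda_2=\mu_2$, and $\lambda_3=\mu_1$. It remains to check the two sign/magnitude claims: $\lambda_2\geq 0$ and $\lambda_2\geq |\lambda_3|$.

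For the first, write $\mu_2 = \tfrac{1}{2}(h_1 - h_2 + h_3)$ and observe that $h_1\geq h_2$ forces $\mu_2 \geq \tfrac{1}{2}h_3 \geq 0$. For the second, the sign of $\mu_1 = \tfrac{1}{2}(-h_1+h_2+h_3)$ is not fixed, so I would split into cases. If $\mu_1\geq 0$ then $|\mu_1|=\mu_1\leq \mu_2$ by the ordering. If $\mu_1<0$, then $|\mu_1|=\tfrac{1}{2}(h_1-h_2-h_3)$ and a direct subtraction gives $\mu_2-|\mu_1|=h_3\geq 0$, again yielding $\mu_2\geq|\mu_1|$.

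The argument is essentially a direct calculation once one notes that $M_{ij}$ and $H_{ij}$ commute; the only mildly nontrivial step is the case split on the sign of the smallest $\mu$, and even that reduces to the nonnegativity of $h_3$. I do not expect a serious obstacle.
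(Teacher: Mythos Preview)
Your proof is correct and follows essentially the same approach as the paper: diagonalize $H_{ij}$, write the eigenvalues of $M_{ij}$ as $\tfrac{1}{2}\Tr(H_{ij})-h_k$, sort them, and verify the inequalities directly. The paper compresses your case split into the single observation that $\lambda_2\pm\lambda_3\geq 0$ (since $\lambda_2-\lambda_3=h_1-h_2\geq 0$ and $\lambda_2+\lambda_3=h_3\geq 0$), but the content is identical.
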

\begin{proof}
If $\eta_1 \geq \eta_2 \geq \eta_3 \geq 0$ are eigenvalues of $H_{ij}$ then	$\frac{\Tr(H_{ij})}{2}-\eta_i$, $i=1,2,3$ are the eigenvalues of $M_{ij}$.
Sorted in decreasing order we get $\lambda_1 = \frac{1}{2}(\eta_1+\eta_2-\eta_3)$, $\lambda_2 = \frac{1}{2}(\eta_1-\eta_2+\eta_3)$ and $\lambda_3 = \frac{1}{2}(-\eta_1+\eta_2+\eta_3)$. Only $\lambda_3$ can be negative and clearly $\lambda_2 \pm \lambda_3 \geq 0$.
\end{proof}
The case $\lambda_3 < 0$ occurs when the leading eigenvalue of $H_{ij}$ is significantly larger than the other two, i.e.\ the estimate $\w_{ij}$ is more certain in a specific direction. This is actually a rather common scenario in image-based estimation as the in-plane part of the rotation is usually better constrained than the out-of-plane rotation angles. Table~\ref{tab:main} shows the percentage of indefinite matrices $M_{ij}$ for the real datasets tested in Section~\ref{sec:realdata}.

\subsection{Global solutions: $O(3)$ vs. $SO(3)$}\label{sec:o3_vs_so3}
Incorporating the anisotropic objective \eqref{eq:anisotropic_objective} in the standard relaxation \eqref{eq:d2obj} may seem like a straightforward extension. We refer to this approach as \textsc{SDP-O(3)-aniso}. The source of the problem with this approach is that it ignores the determinant constraint and optimizes over $O(3)$. To illustrate this problem, we ran $1000$ synthetic anisotropic problem instances (the data generation protocol follows Section~\ref{sec:synthetic_exp} with the eigenvalues of inverse Hessians sampled from $[0.1,1]$). Fig.~\ref{fig:ranks} shows that the standard rotation averaging relaxation is never able to recover a rank-$3$ solution\footnote{To determine rank, we used the smallest number $N$ of singular values (i.e., the first $N$ largest values) that sum up to $>99.9\%$ of the total sum.} for any of the problem instances. In contrast, our relaxation {\textsc{SDP-cSO(3)}}, presented further in Section~\ref{sec:relax}, returns rank-$3$ solutions in all cases.
\begin{figure}[htb]
	\begin{center}
		\includegraphics[width=60mm]{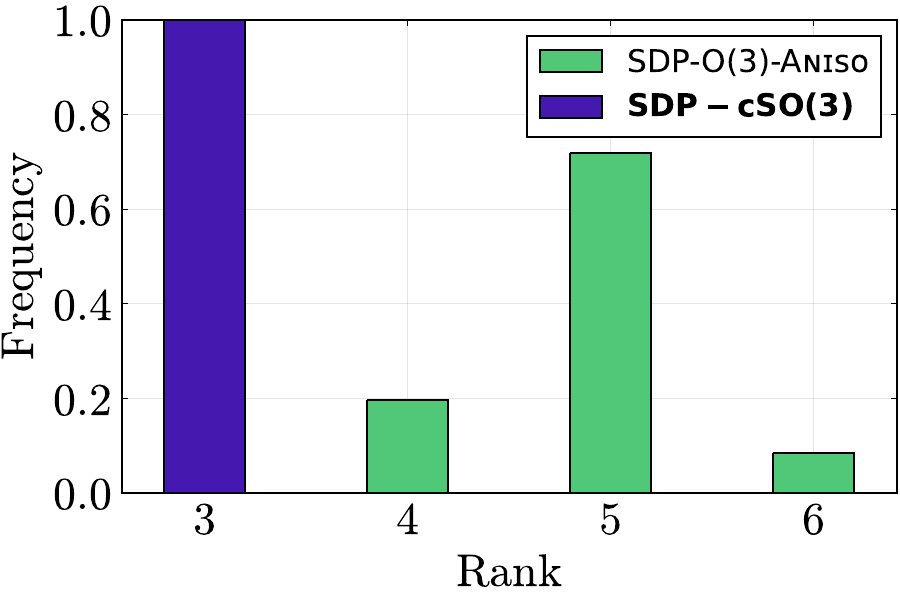}
	\end{center}\vspace{-5mm}
	\caption{The results of running 1000 synthetic instances of rotation averaging with an anisotropic objective. In contrast to the proposed approach the standard convex relaxations give no solutions of rank $3$.}
	\label{fig:ranks}
\end{figure}
 
To better understand the problem, we consider terms of type $-\langle M \tilde{R},R\rangle$\footnote{For the remainder of this section we focus on a single relative rotation and therefore omit the subscripts.}
that constitute the objective function. As we have seen in Section~\ref{sec:aniso_err}, these are locally accurate and allow uncertainties of $\tilde{R}$ to be propagated to the rotation averaging stage.
However, for successful global optimization one has to guarantee that there is no other matrix in the feasible set that gives a lower cost than $\tilde{R}$.
The following theorem shows, that when viewed as a function over $O(3)$, $\tilde{R}$ does typically not result in the smallest objective value:
\begin{theorem}
	Let the eigenvalues of $M$ be such that $\lambda_1 \geq \lambda_2 \geq |\lambda_3|$ and $\lambda_3 < 0$. Then the minimizer of
	\begin{equation}
		f(R) = -\langle M\tilde{R},R \rangle + \langle M\tilde{R},\tilde{R} \rangle,
        \label{eq:singlefun}
	\end{equation}
    over $SO(3)$ is given by $R=\tilde R$ (with $f(\tilde R)=0$), but the minimizer $R'$ over $O(3)$ satisfies $f(R')=-2|\lambda_3|<0$.
\end{theorem}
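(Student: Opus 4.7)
The first move is to reparametrize by $Q := R\,\tilde R\tr$. Since $\tilde R \in SO(3)$, one has $\det Q = \det R$, so $Q$ ranges over the same group as $R$. Using cyclicity of the trace and $\tilde R \tilde R\tr = \mI$, the objective simplifies to
\begin{equation*}
f(R) = -\Tr(M\tilde R R\tr) + \Tr(M\tilde R \tilde R\tr) = \Tr(M) - \Tr(MQ),
\end{equation*}
so both optimizations reduce to maximizing $\Tr(MQ)$ over $SO(3)$ or $O(3)$.

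Next I would diagonalize $M = U\diag(\lambda_1,\lambda_2,\lambda_3)U\tr$ and substitute $Q' := U\tr Q U$. Because $U$ is orthogonal, $\det Q' = \det Q$, so $Q'$ stays in the same group, and the objective becomes $\Tr(MQ) = \sum_i \lambda_i Q'_{ii}$. The $O(3)$ half is then direct: each column of $Q'$ is a unit vector, hence $|Q'_{ii}|\le 1$, giving $\sum_i \lambda_i Q'_{ii} \le \lambda_1+\lambda_2+|\lambda_3|$. This bound is attained by the reflection $Q' = \diag(1,1,-1)\in O(3)\setminus SO(3)$, so
\begin{equation*}
\min_{R \in O(3)} f(R) = \Tr(M) - (\lambda_1+\lambda_2+|\lambda_3|) = -2|\lambda_3|.
\end{equation*}

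The main obstacle is the $SO(3)$ case, where the determinant constraint must rule out the sign flip above. I would parametrize $Q' \in SO(3)$ by axis-angle $(\v n,\theta)$ via Rodrigues' formula, which gives $Q'_{ii} = \cos\theta + n_i^2(1-\cos\theta)$, and therefore
\begin{equation*}
\sum_i \lambda_i Q'_{ii} = \Tr(M) - (1-\cos\theta)\sum_i \lambda_i(1-n_i^2).
\end{equation*}
It then suffices to show $\sum_i \lambda_i(1-n_i^2) \ge 0$. Setting $u_i := 1-n_i^2 \in [0,1]$ with $\sum_i u_i = 2$, the minimum of $\sum_i \lambda_i u_i$ over this constrained box is found by saturating the coefficient of the unique negative eigenvalue, $u_3 = 1$, and placing the remaining unit of mass on the smaller of the two non-negative coefficients, $u_2 = 1$, yielding the minimum value $\lambda_2 + \lambda_3 \ge 0$ by the hypothesis $\lambda_2 \ge |\lambda_3|$. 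Hence $\sum_i \lambda_i Q'_{ii} \le \Tr(M)$ with equality at $\theta = 0$, i.e.\ $Q = \mI$, and therefore $R = \tilde R$ is a minimizer over $SO(3)$ with $f(\tilde R) = 0$, which together with the $O(3)$ computation above completes the proof.
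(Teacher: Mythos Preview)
Your proof is correct and takes a genuinely different route from the paper's. The paper proceeds via the Lagrangian: it introduces a symmetric multiplier $\Lambda$ for the constraint $RR\tr=\mI$, solves the stationarity condition $\Lambda R = M\tilde R$, and enumerates all KKT points as $R = USU\tr\tilde R$ with $S=\diag(\pm 1,\pm 1,\pm 1)$; the $O(3)$ vs.\ $SO(3)$ distinction then amounts to whether $S$ is allowed to have odd parity. Your argument bypasses the KKT machinery entirely by reducing to $\max \Tr(DQ')$ and handling the two cases with elementary bounds: the trivial diagonal bound $|Q'_{ii}|\le 1$ for $O(3)$, and the Rodrigues parametrization together with a small linear program over the simplex-like polytope $\{u\in[0,1]^3:\sum u_i=2\}$ for $SO(3)$.

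What each approach buys: the paper's KKT enumeration is systematic and would transfer to related constrained problems, but it tacitly assumes $\Lambda$ is invertible (i.e.\ no eigenvalue of $M$ is zero) when writing $R=\Lambda^{-1}M\tilde R$. Your argument is more self-contained, avoids that invertibility caveat, and makes the role of the hypothesis $\lambda_2\ge|\lambda_3|$ transparent---it is exactly what forces the LP minimum $\lambda_2+\lambda_3$ to be non-negative. A minor remark: your final equality clause (``with equality at $\theta=0$'') establishes that $R=\tilde R$ is \emph{a} minimizer; uniqueness would fail in the borderline case $\lambda_2=|\lambda_3|$, but the paper's statement and proof do not address uniqueness either.
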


\begin{proof}
We minimize $-\langle M \tilde{R}, R\rangle$ by computing KKT-points over $O(3)$. 
Introducing a (symmetric) Lagrange multiplier $\Lambda$ for the constraint $R R\tr-\mI=0$ and differentiating
\begin{equation}
	L(R,\Lambda) = -\langle M\tilde{R}, R \rangle + \tfrac{1}{2}\langle \Lambda, RR\tr - \mI \rangle
\end{equation}
with respect to $R$, gives $\tfrac{\partial L}{\partial R} = -M\tilde{R} +\Lambda R \stackrel{!}= 0$.
Hence the task is to find a symmetric multiplier matrix $\Lambda$ and a rotation matrix $Q=\tilde{R}R\tr$ such that $MQ=\Lambda$. Since $M$ is symmetric, it has an eigen-decomposition $M=UDU\tr$, and we must have $(MQ)(MQ)\tr = MM\tr = UD^2U\tr = \Lambda \Lambda\tr$. Therefore, $\Lambda$ has the same eigenvalues as $M$ up to their signs, and consequently all solutions for $\Lambda$ are given by $\Lambda = UDSU\tr$, where $S = \diag(\pm 1,\pm 1, \pm 1)$.
Further
\begin{align}
  R = \Lambda^{-1} M \tilde R = UD^{-1} S^{-1}U\tr U D U\tr \tilde R = USU\tr\tilde{R}
\end{align}
with the corresponding objective value
\begin{equation}
	-\langle UDU\tr\tilde{R}, USU\tr\tilde {R} \rangle = \pm \lambda_1 \pm \lambda_2 \pm \lambda_3,
\end{equation}
where $\lambda_1,\lambda_2,\lambda_3$ are the eigenvalues of $M$.
According to Lemma~\ref{lemma:eigs} the smallest eigenvalue $\lambda_3$ of $M$ is negative if $M$ is indefinite. 
In this case the objective function is minimized over $O(3)$ by choosing $S = \diag(1,1,-1)$ which yields the solution $R = USU\tr \tilde{R}$ and $-\langle M \tilde{R},R\rangle = -\lambda_1-\lambda_2-|\lambda_3|$.
In contrast  $-\langle M\tilde{R},\tilde{R} \rangle= -\lambda_1-\lambda_2+|\lambda_3|$, showing that the smallest value of $f$ is $f(R) = -2|\lambda_3|$.

Since $SO(3) \subset O(3)$ the KKT points of $SO(3)$ are those which satisfy $\det(R)=1$, which implies that the matrix $S$ has to have either zero or two elements that are $-1$. 
Therefore the above choice of $S$ is infeasible, and the optimal $S$ is instead given by $S=I$, which shows that the minimum value of $f$ is $f(\tilde{R}) = 0$.
\end{proof}

The above result shows that loss-terms of the type $-\langle M \tilde{R}, R \rangle$ will favor incorrect solutions (that can be far from the estimation $\tilde{R}$) when optimizing over $O(3)$, which explains the results in Figure~\ref{fig:ranks}. While strictly enforcing $R \in SO(3)$ is undesirable, as it results in non-convex constraints, the following result shows that it is enough to use $R \in \ch(SO(3))$ to resolve this issue.

\begin{corollary} If the eigenvalues of $M$ fulfill $\lambda_1 \geq \lambda_2 \geq |\lambda_3|$ then the function \eqref{eq:singlefun} is non-negative on $\ch(SO(3))$.
\end{corollary}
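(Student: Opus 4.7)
The plan is to exploit the linearity (really affinity) of $f$ in $R$, combined with the theorem just proved, which pins down the minimum of $f$ over $SO(3)$.

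More concretely, observe first that $f$ as written in~\eqref{eq:singlefun} is an affine function of the matrix argument $R$: the term $-\langle M\tilde R, R\rangle$ is linear and $\langle M\tilde R, \tilde R\rangle$ is a constant. This is the key structural property that I would use. Then by definition any $X \in \ch(SO(3))$ can be written as a finite convex combination $X = \sum_k \alpha_k R_k$ with $R_k \in SO(3)$, $\alpha_k \geq 0$ and $\sum_k \alpha_k = 1$, so affinity yields $f(X) = \sum_k \alpha_k f(R_k)$.

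The theorem just proved establishes that the minimum of $f$ over $SO(3)$ equals $f(\tilde R) = 0$, i.e.\ $f(R_k) \geq 0$ for every $R_k \in SO(3)$. Substituting this into the convex-combination expression immediately gives $f(X) \geq 0$ for all $X \in \ch(SO(3))$, which is the claim. (Equivalently, one can argue that an affine function on a compact convex set attains its minimum at an extreme point, and the extreme points of $\ch(SO(3))$ are contained in $SO(3)$ itself, so the minimum on $\ch(SO(3))$ equals the minimum on $SO(3)$.)

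There is not really a hard step here once the theorem is available: the only thing one must be careful about is to note that the constant term $\langle M\tilde R,\tilde R\rangle$ in $f$ is essential — without subtracting it, the bare linear functional $-\langle M\tilde R, R\rangle$ would have a negative minimum on $SO(3)$ too (namely $-\lambda_1 - \lambda_2 + |\lambda_3|$ when $\lambda_3 < 0$), and the conclusion would fail. So the proof plan is almost a one-liner, but it relies crucially on the normalization in~\eqref{eq:singlefun} and on affinity in $R$, both of which are manifest in the statement.
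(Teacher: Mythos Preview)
Your proposal is correct and essentially matches the paper's own proof: both reduce the claim to the theorem by observing that an affine functional achieves its minimum over $\ch(SO(3))$ at a point of $SO(3)$. The paper phrases this via extreme points (which you also mention parenthetically), whereas your primary presentation uses the direct convex-combination argument; these are two sides of the same elementary fact, and your added remark about the role of the constant $\langle M\tilde R,\tilde R\rangle$ is a nice clarifying touch.
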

\begin{proof}
For a linear objective function it is well known that the optimal value is attained in an extreme point.
For any set $C$, the extreme points of $\ch(C)$ are always contained in $C$ \cite{rockafellar1970}. 
Further, since for every point $Q \in SO(3)$ the linear function $-\langle Q,R\rangle$ has its unique minimum in $R=Q$, 
it can be deduced that the set of extreme points of $\ch(SO(3))$ is exactly $SO(3)$. Thus, the minimum value of $f$ is the same over $SO(3)$ as over $\ch(SO(3))$.
\end{proof}

\subsection{A stronger convex relaxation}\label{sec:relax}
The results of the previous section show that we need to incorporate constraints allowing us to find solutions in $SO(3)$.
The direct incorporation of determinant constraints does not lead to a QCQP, and it is unclear how to compute duals. 
However, similarly to the previous section we can ``solve'' single term problems over $SO(3)$ (or at least in the respective convex hull). 
We therefore introduce auxiliary variables $Q_{ij} = R_i R_j\tr$ and consider the problem
\begin{eqnarray}
	& \min_{\mathbf R, \mathbf Q} &   -\langle \mathbf N, \mathbf R \mathbf R\tr \rangle \label{eq:propprimalobj}\\
	& \text{s.t} & R_i R_i\tr = \mI, \label{eq:propprimalconst1}\\
	& & Q_{ij} = R_i R_j\tr, \label{eq:propprimalconst2}\\
	& & Q_{ij} \in SO(3). \label{eq:propprimalconst3}
\end{eqnarray}
Next we introduce dual variables $\Upsilon_{ii}$ and $\Upsilon_{ij}$ for the constraints \eqref{eq:propprimalconst1} and \eqref{eq:propprimalconst2} respectively, but we retain \eqref{eq:propprimalconst3} as an explicit constraint. The matrices $\Upsilon_{ii}$ are symmetric, $\Upsilon_{ij} = \Upsilon_{ji}\tr$, and we view them as blocks of a symmetric matrix $\mathbf \Upsilon$.
This yields the Lagrangian
\begin{align}
L&(\mathbf R,\mathbf Q,\mathbf \Upsilon) = -\langle \mathbf N, \mathbf R \mathbf R\tr \rangle +\sum\nolimits_{i} \langle\Upsilon_{ii},R_i R_i\tr - \mI\rangle \nonumber \\
  &\qquad\qquad\quad + \sum\nolimits_{i\neq j}\langle\Upsilon_{ij},R_i R_j\tr -Q_{ij}\rangle \nonumber \\
   & = \langle \mathbf \Upsilon- \mathbf N, \mathbf R \mathbf R\tr \rangle - \Tr\left(\mathbf \Upsilon\right) - \sum\nolimits_{i\neq j} \langle \Upsilon_{ij},Q_{ij} \rangle.
\end{align}
The dual variables decouple the constraints and makes the Lagrangian quadratic with respect to $\mathbf R$ and linear (and separable) with respect to $Q_{ij}$. Therefore the dual function
\begin{equation}
	d(\mathbf \Upsilon) = \min_{\mathbf Q \in SO(3)^{n\times n},\mathbf R} L(\mathbf R,\mathbf Q,\mathbf \Upsilon),
	\label{eq:dualfun} 
\end{equation}
can be computed while explicitly forcing $Q_{ij} \in SO(3)$.
The minimum value of $\langle \mathbf \Upsilon- \mathbf N, \mathbf R  \mathbf R\tr \rangle$ is $0$ if $\mathbf \Upsilon- \mathbf N \succeq 0$ and $-\infty$ otherwise.
Let $\mathcal{I}_{SO(3)}$ be the indicator function, 
\begin{align}
    \mathcal{I}_{SO(3)}(Q_{ij}) = \begin{cases} 0 & Q_{ij} \in SO(3) \\ \infty& Q_{ij} \notin SO(3) \end{cases},
\end{align}
then we are able to write
\begin{equation}
\begin{split}
\min_{Q_{ij}\in SO(3)} \!\!\!\!&-\langle \Upsilon_{ij},Q_{ij} \rangle \\ 
&= -\!\left(\max\nolimits_{Q_{ij}}  \langle \Upsilon_{ij},Q_{ij} \rangle-\mathcal{I}_{SO(3)}(Q_{ij})\right).
\end{split}
\end{equation}
The last term can be identified as $-\mathcal{I}^*_{SO(3)}(\Upsilon_{ij})$ where $\mathcal{I}^*_{SO(3)}$ is the convex conjugate~\cite{rockafellar1970} of $\mathcal{I}_{SO(3)}$.
We thus obtain the dual problem
\begin{align}
    \begin{split}
	\max_{\mathbf \Upsilon}  & -\Tr(\mathbf \Upsilon) - \sum\nolimits_{i \neq j} \mathcal{I}_{SO(3)}^*(\Upsilon_{ij}) \\
	\text{s.t. } & \mathbf \Upsilon - \mathbf N \succeq 0.
    \end{split}
\end{align}
Taking the dual once more we get
\begin{equation}
	\min_{\mathbf X \succeq 0} \max_{\mathbf \Upsilon} -\Tr(\mathbf \Upsilon) + \langle \mathbf X, \mathbf \Upsilon- \mathbf N \rangle - \sum_{i \neq j} \mathcal{I}_{SO(3)}^*(\Upsilon_{ij}),
\end{equation}
which is the same as
\begin{equation}
	\begin{split}
	\min_{\mathbf X \succeq 0}\max_{\mathbf \Upsilon} -\Tr(\mathbf N \mathbf X)+ \sum\nolimits_{i}  \max_{\Upsilon_{ii}} \langle(X_{ii}-\mI),\Upsilon_{ii}\rangle \\ + \sum\nolimits_{i\neq j}  \max_{\Upsilon_{ij}} \left(\langle X_{ij},\Upsilon_{ij}\rangle- \mathcal{I}_{SO(3)}^*(\Upsilon_{ij})\right).		
	\end{split}
\end{equation}
If $X_{ii} \neq \mI$ it is clear that the maximum over $\Upsilon_{ii}$ will be unbounded. The second term is $\mathcal{I}_{SO(3)}^{**}(X_{ij})$, which is the convex envelope of the indicator function $\mathcal{I}_{SO(3)}$. This is also the indicator function of $\operatorname{conv}(SO(3))$ \cite{rockafellar1970}.
Therefore the bidual program---and consequently our proposed relaxation---is given by
\begin{align}
    \begin{split}
	 \min_{\mathbf X \succeq 0} & -\Tr(\mathbf N \mathbf X)\\
     \text{s.t. } & X_{ii} = \mI,\; X_{ij} \in \operatorname{conv}(SO(3)).
    \end{split}
    \tag{\textsc{SDP-cSO(3)}}
    \label{eq:final_formulation}
\end{align}
The constraint $X_{ij} \in \operatorname{conv}(SO(3))$ has been shown to be equivalent to a semidefinite constraint \cite{saunderson2015,sanyal2011}.
A $3 \times 3$ matrix $Y$ is in $\operatorname{conv}(SO(3))$ if and only if $\mathcal{A}(Y) + \mI \succeq 0$,
where $\mathcal{A}(Y)=$
\begin{equation}
	\tiny
	 \begin{+pmatrix}[colsep=0pt, rowsep=0.5pt]
		-Y_{11}-Y_{22}+Y_{33} & Y_{13}+Y_{31} & Y_{12}-Y_{21} & Y_{23}+Y_{32} \\
		Y_{13}+Y_{31} &Y_{11}-Y_{22}-Y_{33} &Y_{23}-Y_{32} & Y_{12}+Y_{21} \\
		Y_{12}-Y_{21} &Y_{23}-Y_{32} &Y_{11}+Y_{22}+Y_{33}& Y_{31}-Y_{13}\\
		Y_{23}+Y_{32}& Y_{12}+Y_{21}& Y_{31}-Y_{13}& -Y_{11}+Y_{22}-Y_{33}
	\end{+pmatrix}.\nonumber
\end{equation}
In contrast to the relaxation in \eqref{eq:d2obj}, the inclusion of the convex hull constraints can rule out solutions with incorrect determinants, as we have shown previously. This is crucial when having indefinite cost matrices. 
While a theoretical guarantee of a tight relaxation is beyond the scope of this paper, it is clear by construction that our new relaxation will be stronger than \eqref{eq:d2obj}. In addition, our empirical results show that there is a significant difference between the two approaches.
Figure~\ref{fig:ranks} shows the result of applying our new convex relaxation to the synthetic problem described in Section~\ref{sec:o3_vs_so3}. In all problem instances the new relaxation returns a solution that is of rank $3$ and therefore optimal in the primal problem.

\section{Experiments}
We implemented the SDP program using the conic splitting solver~\cite{odonoghue2016} with the JuMP~\cite{lubin2023} wrapper in Julia\footnote{The code is available at: \href{https://github.com/ylochman/anisotropic-ra}{https://github.com/ylochman/anisotropic-ra}}. The cost matrix in~\eqref{eq:final_formulation} is down-scaled by the average (across the observed relative poses in the scene) of the largest eigenvalue of the computed Hessian matrix, which proved beneficial empirically. The absolute and relative feasibilities are set to $10^{-5}$ and $10^{-6}$, respectively, and infeasibility tolerance is $10^{-8}$. The number of iterations is limited to $500\,000$. In practice, the methods converge in far fewer iterations to meet the stopping criterion.
While it is likely that dedicated solvers are much more efficient, we remark that generalization of efficient methods such as \cite{dellaert2020} may not be straightforward since they rely heavily on properties $O(d)$, which are not sufficient to ensure $SO(3)$ membership.

\begin{figure*}[!t]
\begin{center}
\begin{subfigure}[t]{0.245\textwidth}
\includegraphics[height=26mm]{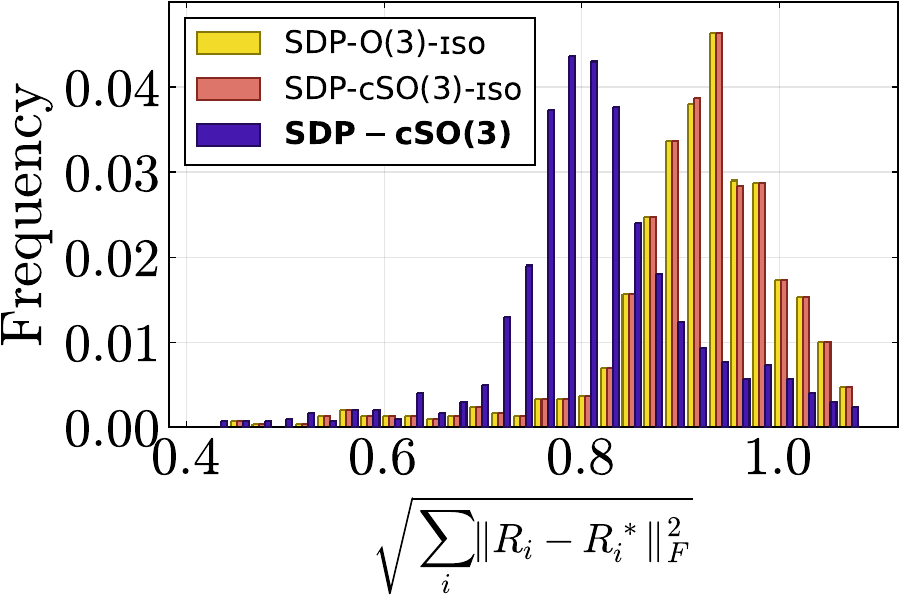}\\[4pt]
\includegraphics[height=27mm]{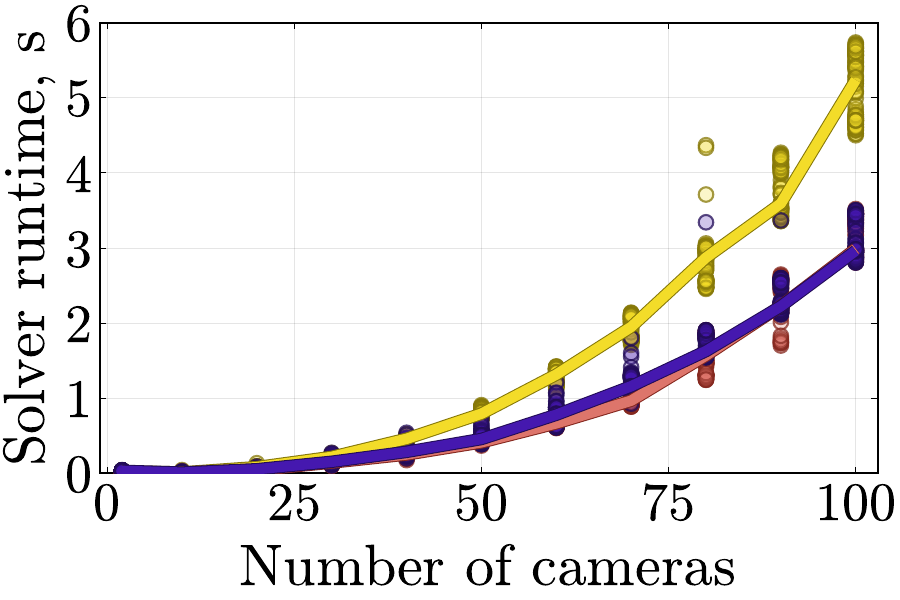}
\caption{$p = 0.2$}
\end{subfigure}
\begin{subfigure}[t]{0.245\textwidth}
\includegraphics[height=26mm]{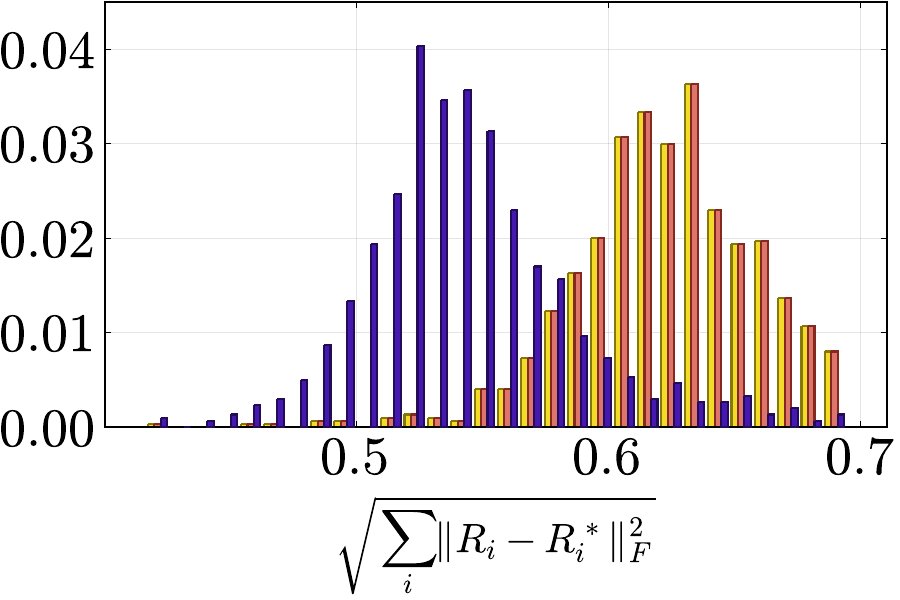}\\[4pt]
\includegraphics[height=27mm]{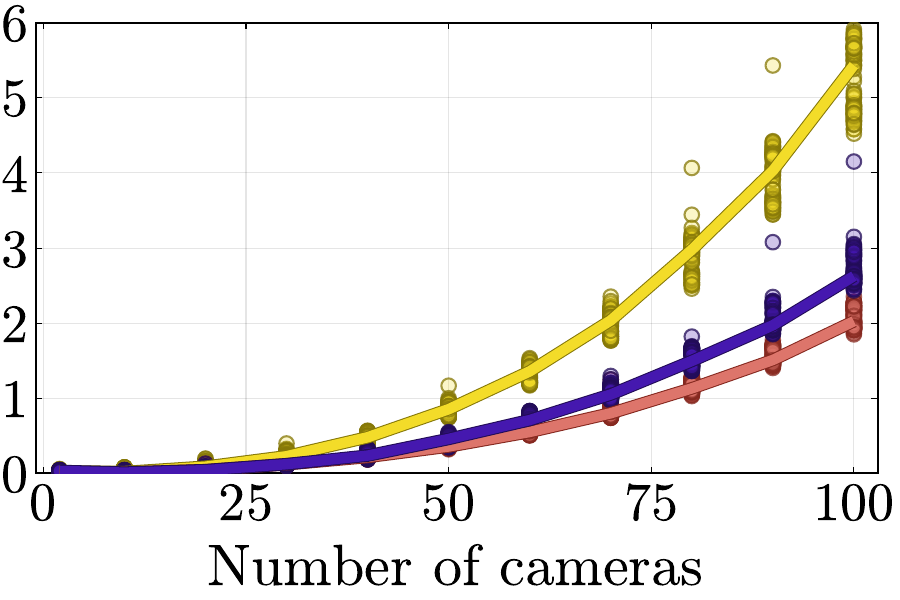}
\caption{$p = 0.4$}
\end{subfigure}
\begin{subfigure}[t]{0.245\textwidth}
\includegraphics[height=26mm]{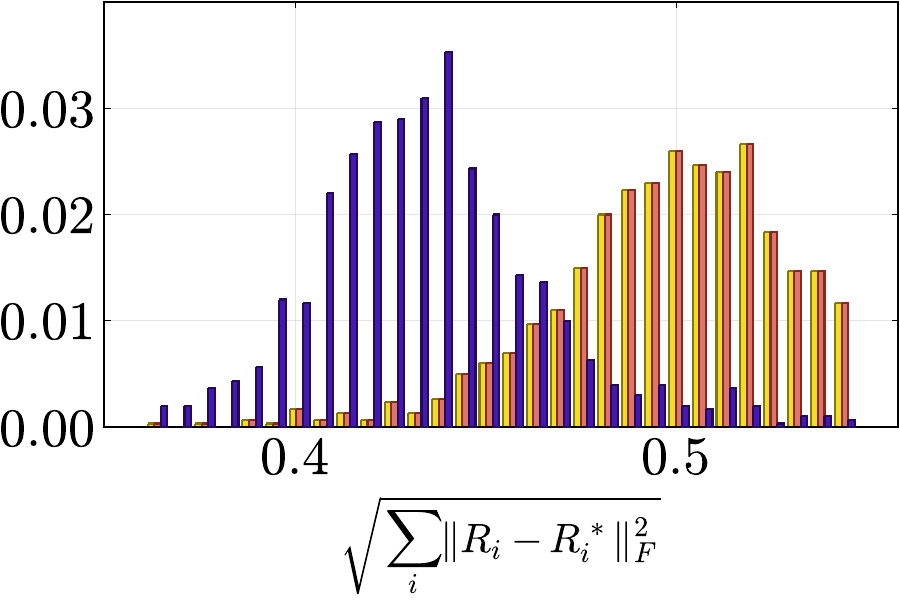}\\[4pt]
\includegraphics[height=27mm]{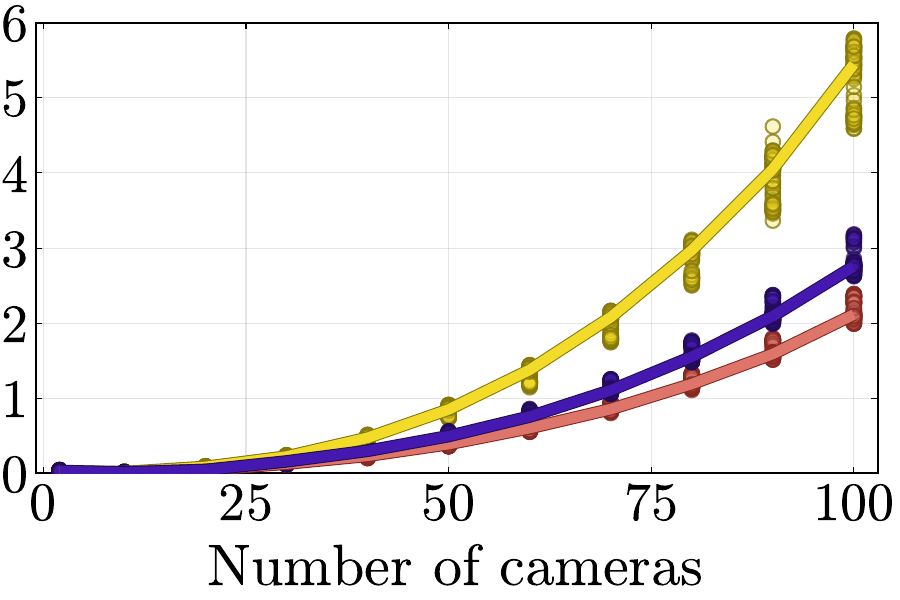}
\caption{$p = 0.6$}
\end{subfigure}
\begin{subfigure}[t]{0.245\textwidth}
\includegraphics[height=26mm]{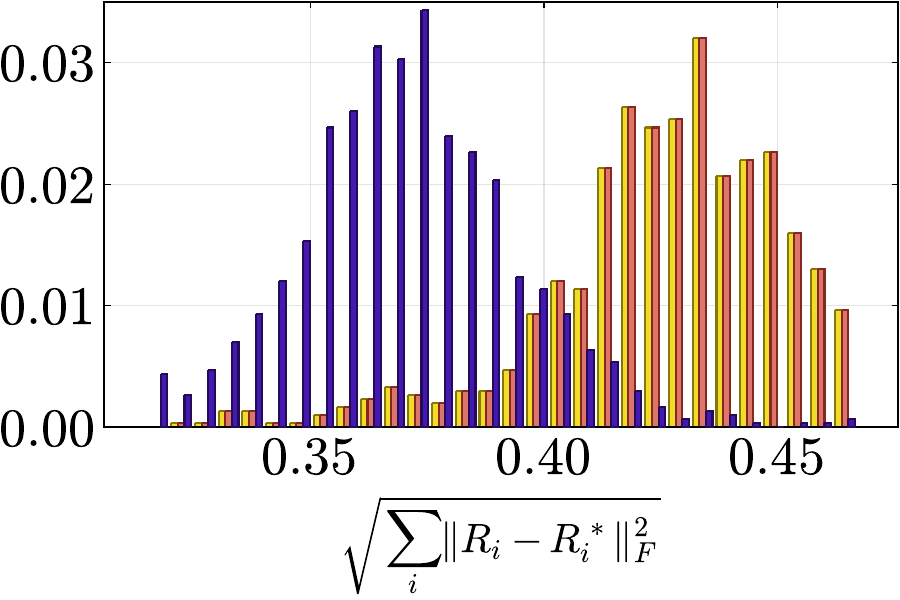}\\[4pt]
\includegraphics[height=27mm]{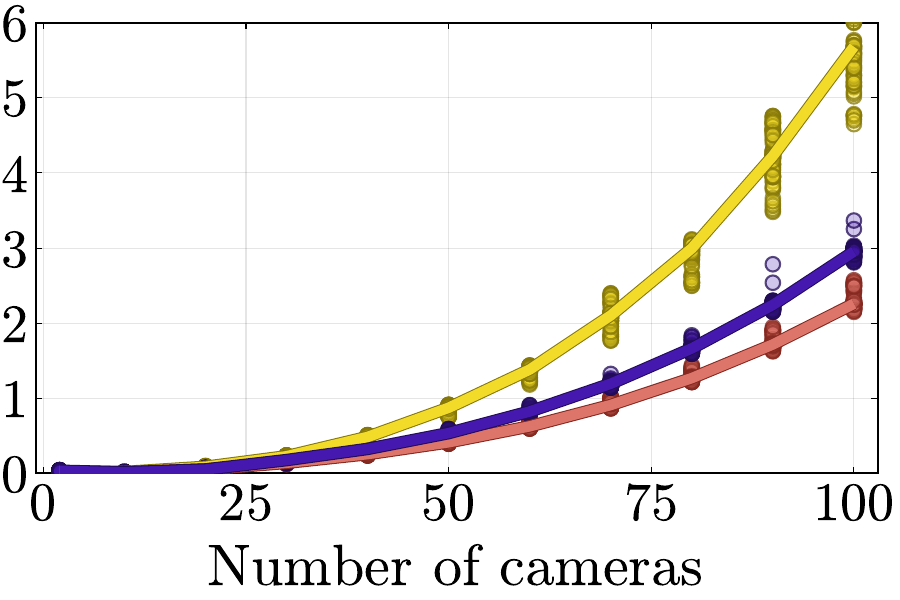}
\caption{$p = 0.8$}
\end{subfigure}\\
\end{center}\vspace{-5mm}
\caption{Histograms of rotation errors wrt.\ ground truth (top). Corresponding solver runtimes (s) wrt.\ number of cameras, for different fractions $p$ of observed relative rotations (bottom). Scatter plots represent all instances, and the solid lines are the respective medians.}
\label{fig:synthetic_study}
\end{figure*}

We compare\footnote{Complementing results on synthetic and real data can be found in the supplementary material. These also include comparison to the anisotropic extension of the spectral approach~\cite{arie-nachimson2012global-5bd}.} three approaches: 1) \textsc{SDP-O(3)-iso} is the standard rotation averaging approach that uses the regular chordal distance and ignores the $\operatorname{conv}(SO(3))$ constraints, 2) \textsc{SDP-O(3)-aniso} uses the anisotropic objective function but ignores the $\operatorname{conv}(SO(3))$ constraints, and 3) {\textsc{SDP-cSO(3)}} is the proposed method that uses both the anisotropic objective and the $\operatorname{conv}(SO(3))$ constraints. For completeness of the synthetic study, we also include \textsc{SDP-cSO(3)-iso}, which uses the isotropic objective and the $\operatorname{conv}(SO(3))$ constraints. The reported rotation error wrt. ground truth is the difference $\sqrt{\sum_i \|R_i - R_i^*\|_F^2}$, where $R_i$ are the ground truth absolute rotations and $R_i^*$ are the estimated ones. Note that to remove gauge freedom, we align the two sets of rotations $\{R_i\}_i$ and $\{R_i^*\}_i$, by applying a global rotation $V$ to all rotations in $\{R_i\}_i$ such that the rotation error achieves its minimum wrt. $V$.

\subsection{Synthetic experiments}\label{sec:synthetic_exp}
To create synthetic data, we randomly generate absolute rotations $R_i$ and (invertible) covariance matrices $H_{ij}^{-1}$ with eigenvalues in the range $[0.01,0.1]$. For each instance of the problem, we then draw $\Delta \w_{ij}$ from $\mathcal{N}(0,H_{ij}^{-1})$ and compute noisy relative rotations $\tilde{R}_{ij} = e^{[\Delta \w_{ij}]_\times}R_i R_j\tr$.

In Figure~\ref{fig:synthetic_study} we evaluate the effect of using the proposed anisotropic error measurements versus regular chordal distances together with the proposed relaxation versus the standard one. We vary the number of cameras from $2$ to $100$, and the proportion of missing data from $0\%$ to $90\%$. For each configuration pair, we run $100$ instances of the problem. In the top row of Figure~\ref{fig:synthetic_study}, we report the distance $\sqrt{\sum_i \|R_i - R_i^*\|_F^2}$. As expected, using the proposed method (i.e., both new objective and new relaxation) consistently leads to lower errors. We record the time-to-solution of the solver, shown in the bottom row of Figure~\ref{fig:synthetic_study}. In this synthetic setting, the proposed relaxation is more efficient. Our intuition is that the introduction of local constraints---$X_{ij} \in \text{conv}(SO(3))$---reduces the oscillations in the employed first-order SDP solver and therefore leads to faster time-to-solution for optimizing both isotropic and anisotropic costs with the proposed constraints.

\subsection{Real experiments}\label{sec:realdata}
We present the results of applying the proposed framework to a number of public structure from motion datasets
\footnote{Available at:\\ \href{https://www.maths.lth.se/matematiklth/personal/calle/dataset/dataset.html}{https://www.maths.lth.se/matematiklth/personal/calle/dataset/dataset.html}} from \cite{enqvist2011,olsson2011}.
For every pair of images (with 5 or more point matches), we use RANSAC and the 5 point solver \cite{nister2004} to generate an initial solution to the pairwise geometry which is refined, using bundle adjustment \cite{triggs1999}, into a local minimum of the sum-of-squared reprojection errors. Here we fix gauge freedom by assuming that the first camera is fixed, the second camera is at distance $1$ from the first, and represent 3D points with homogeneous coordinates with norm $1$. 
We then compute the second order Taylor approximation of the objective function and marginalize over the position of the second camera and 3D points. This gives us a quadratic function, in the axis-angle variable that represents the relative rotation, as described in Section~\ref{sec:aniso_err} and Figure~\ref{fig:door}, that we use for the anisotropic objective. 

In Table~\ref{tab:main} we report the obtained rank for each method and the distance $\sqrt{\sum_i  \|R_i - R_i^*\|_F^2}$, where the ground truth $\{R_i\}_i$ is from the final reconstruction obtained using the pipeline of \cite{olsson2011}.
When \textsc{SDP-O(3)-aniso} does not give a solution $X^*$ of rank 3, we compute the closest rank 3 approximation to $X^*$ by setting all but the three largest eigenvalues to $0$. We then compute a factorization $X^* = VV\tr$, where $V$ has $3$ columns. From each consecutive $3\times 3$ block $V_i$ of $V$ we then compute the closest rotation matrix which we take to be the approximate solution $R_i^*$.

The proposed method {\textsc{SDP-cSO(3)}} gives more accurate estimations in all but one of the datasets. We remark that there is no guarantee that anisotropy (or a more correct statistical model in general) will result in better estimates for every noise realization,
and all we can expect from MLE is that---on average---it will yield better results.
We only have access to a single noise realization for real datasets, and the ``ground-truth'' solution generated by SfM software may be biased.
Our approach is also as fast as the standard method \textsc{SDP-O(3)-iso} in half of the datasets. The results also confirm that the standard semidefinite relaxation fails on real data when anisotropic costs are used (\textsc{SDP-O(3)-aniso}).

\setlength\extrarowheight{3pt}
\begin{table*}
	\def\w{26mm}
	\begin{center}
		\begin{tabular}{cclcccc}
		\multicolumn{2}{c}{Dataset}	  &Method& $\operatorname{rank} (X^*)$ & $\sqrt{\sum_i  \|R_i \!-\! R_i^*\|_F^2}$ & Runtime, s& \\
			\hline
        \multirow{3}{*}{\begin{minipage}{\w}
    LU Sphinx \\
    \ind70 cameras\\
    \ind85\% indef.
\end{minipage}} & \multirow{3}{*}{\includegraphics[height=14mm]{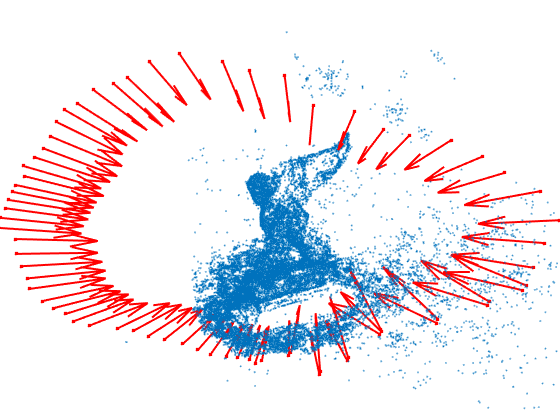}} & \textsc{SDP-O(3)-iso} & 3 & 0.0944 & 2 &\\

&& \textsc{SDP-O(3)-aniso} & 7 &18.6037 & 460\\ 
&& \textbf{\textsc{SDP-cSO(3)}} & 3 & \textbf{0.0740}& 5\\
                
			  \hline
			\multirow{3}{*}{\begin{minipage}{\w}
		Round Church \\
		\ind92 cameras\\
            \ind98\% indef.
\end{minipage}} & \multirow{3}{*}{\includegraphics[height=14mm]{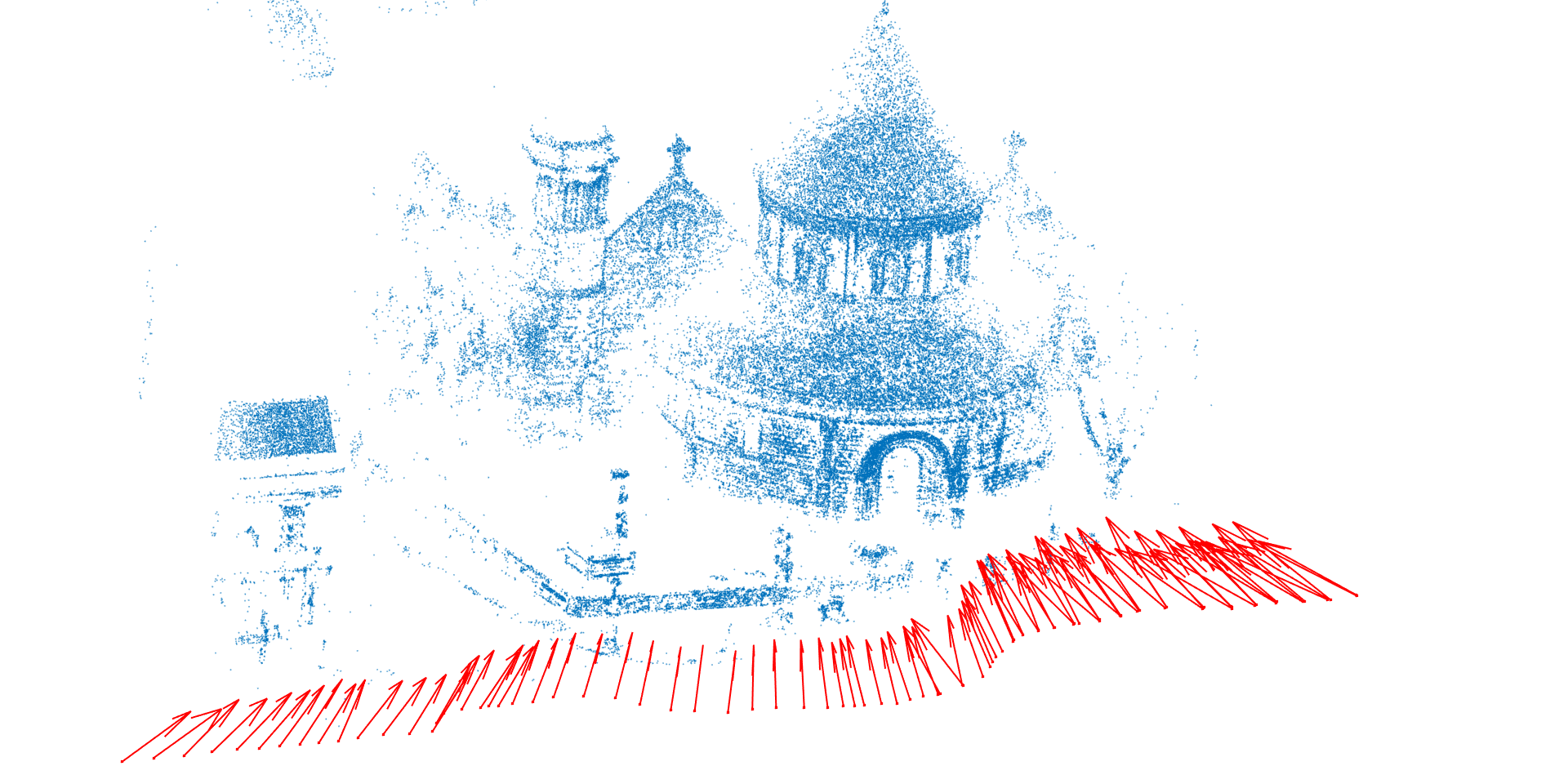}} & \textsc{SDP-O(3)-iso} & 3& 0.1399 & 6\\
&& \textsc{SDP-O(3)-aniso} & 6 & 26.3808 & 632\\ 
&& \textbf{\textsc{SDP-cSO(3)}} & 3 & \textbf{0.1267}& 55\\
\hline
			\multirow{3}{*}{\begin{minipage}{\w}
		UWO \\
	\ind114 cameras\\
        \ind77\% indef.
\end{minipage}} & \multirow{3}{*}{\includegraphics[height=14mm]{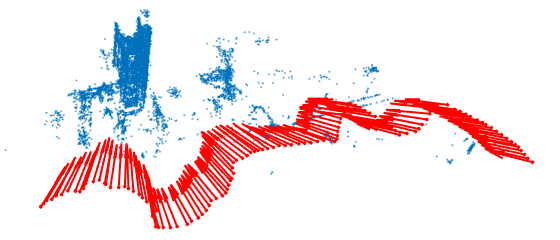}} & \textsc{SDP-O(3)-iso} &3 & 0.3142 & 14\\
&& \textsc{SDP-O(3)-aniso} & 6 &22.6873 & 1929\\ 
&& \textbf{\textsc{SDP-cSO(3)}} & 3 & \textbf{0.2274}& 7\\ 
\hline
			\multirow{3}{*}{\begin{minipage}{\w}
		Tsar Nikolai I \\
		\ind89 cameras\\
            \ind87\% indef.
\end{minipage}} & \multirow{3}{*}{\includegraphics[height=14mm]{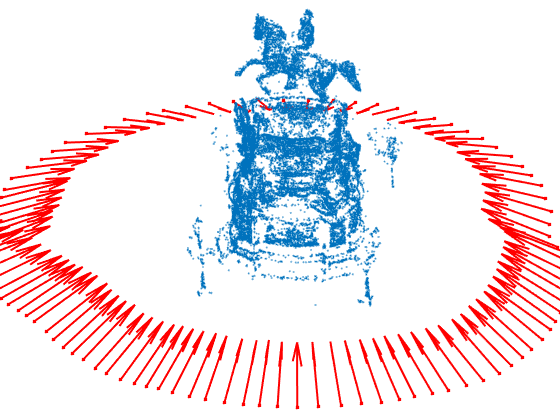}} & \textsc{SDP-O(3)-iso} &3 & 0.1170 & 7\\
&& \textsc{SDP-O(3)-aniso} & 6 &26.8944 & 1245\\ 
&& \textbf{\textsc{SDP-cSO(3)}} & 3 & \textbf{0.0534}& 5\\
\hline
			\multirow{3}{*}{\begin{minipage}{\w}
		Vercingetorix \\
		\ind69 cameras\\
            \ind77\% indef.
\end{minipage}} & \multirow{3}{*}{\includegraphics[height=14mm]{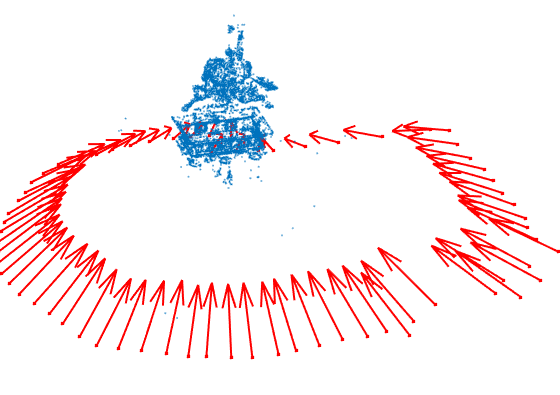}} & \textsc{SDP-O(3)-iso} &3 & 0.3146 & 2\\
&& \textsc{SDP-O(3)-aniso} & 6 &14.8244 & 242\\ 
&& \textbf{\textsc{SDP-cSO(3)}} & 3 & \textbf{0.2910}& 4\\
\hline
			\multirow{3}{*}{\begin{minipage}{\w}
		Eglise Du Dome \\
		\ind69 cameras\\
            \ind98\% indef.
\end{minipage}} & \multirow{3}{*}{\includegraphics[height=14mm]{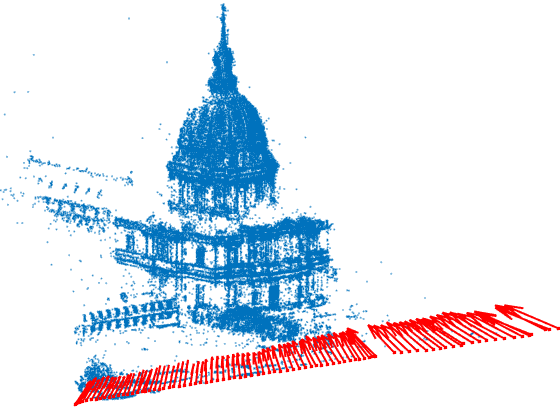}} & \textsc{SDP-O(3)-iso} &3 & 0.0546 & 4\\
&& \textsc{SDP-O(3)-aniso} & 6 &20.1954 & 1840\\ 
&& \textbf{\textsc{SDP-cSO(3)}} & 3 & \textbf{0.0487}& 5\\
\hline
			\multirow{3}{*}{\begin{minipage}{\w}
		King's College \\
		\ind77 cameras\\
            \ind100\% indef.
\end{minipage}} & \multirow{3}{*}{\includegraphics[height=14mm]{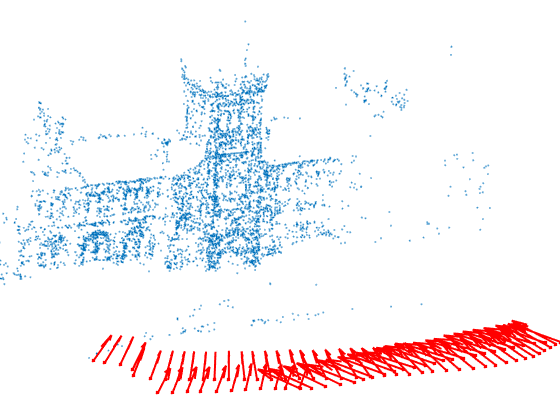}} & \textsc{SDP-O(3)-iso} &3 & 0.1656 & 4\\
&& \textsc{SDP-O(3)-aniso} & 6 &17.6508 & 681\\ 
&& \textbf{\textsc{SDP-cSO(3)}} & 3 & \textbf{0.0796}& 83\\
\hline
			\multirow{3}{*}{\begin{minipage}{\w}
		Kronan \\
		\ind131 cameras\\
            \ind98\% indef.
\end{minipage}} & \multirow{3}{*}{\includegraphics[height=14mm]{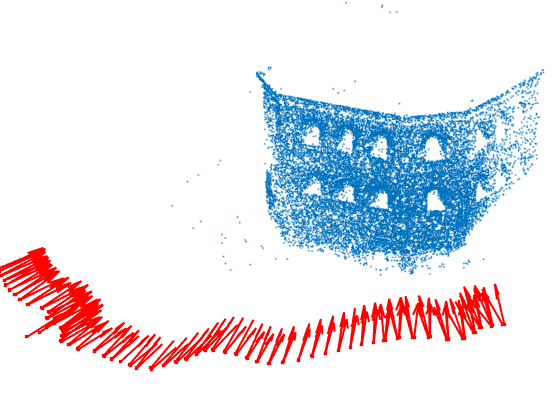}} & \textsc{SDP-O(3)-iso} &3 & {\bf 0.2149} & 18\\
&& \textsc{SDP-O(3)-aniso} & 6 &22.6035 & 2997\\ 
&& \textbf{\textsc{SDP-cSO(3)}} & 3 & 0.3892 & 201\\ 
\hline
			\multirow{3}{*}{\begin{minipage}{\w}
		Alcatraz \\
		\ind133 cameras\\
            \ind97\% indef.
\end{minipage}} & \multirow{3}{*}{\includegraphics[height=14mm]{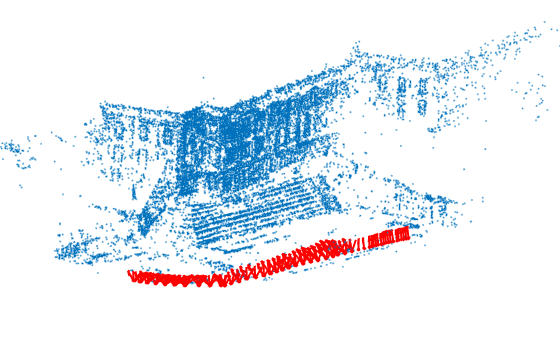}} & \textsc{SDP-O(3)-iso} &3 & 0.1763 & 19\\
&& \textsc{SDP-O(3)-aniso} & 6 & 22.7623 & 1931\\ 
&& \textbf{\textsc{SDP-cSO(3)}} & 3 & \textbf{0.1268}& 107\\
\hline
			\multirow{3}{*}{\begin{minipage}{\w}
		Mus. Barcelona \\
		\ind133 cameras\\
            \ind97\% indef.
\end{minipage}} & \multirow{3}{*}{\includegraphics[height=14mm]{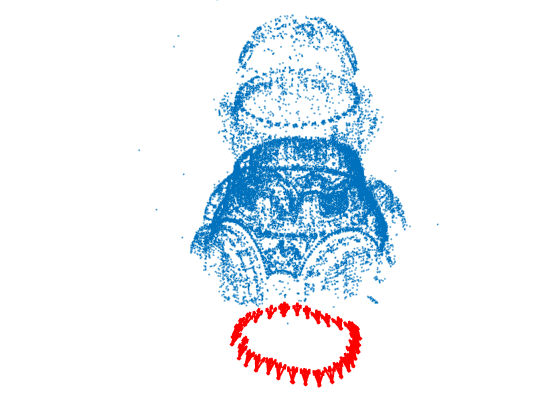}} & \textsc{SDP-O(3)-iso} &3 & 0.2255 & 22\\
&& \textsc{SDP-O(3)-aniso} & 7 & 30.3428 & 669\\ 
&& \textbf{\textsc{SDP-cSO(3)}} & 3 & \textbf{0.1310}& 27\\
\hline
			\multirow{3}{*}{\begin{minipage}{\w}
		Temple Singapore\\
		\ind157 cameras\\
            \ind97\% indef.
\end{minipage}} & \multirow{3}{*}{\includegraphics[height=14mm]{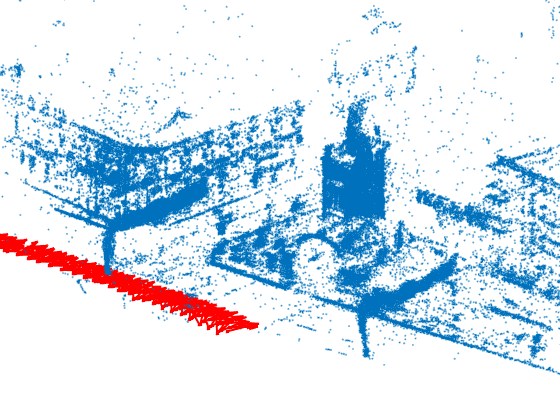}} & \textsc{SDP-O(3)-iso} &3 & 0.2646 & 31\\
&& \textsc{SDP-O(3)-aniso} & 6 & 26.0824 & 2313\\ 
&& \textbf{\textsc{SDP-cSO(3)}} & 3 & \textbf{0.1696} & 628\\
\hline

		\end{tabular}
	\end{center}
	\caption{Results on real data from \cite{enqvist2011,olsson2011}. \textsc{SDP-O(3)-iso} uses the regular chordal distance and ignores the $\operatorname{conv}(SO(3))$ constraints, \textsc{SDP-O(3)-aniso} uses the proposed anisotropic objective but ignores the $\operatorname{conv}(SO(3))$ constraints, \textbf{\textsc{SDP-cSO(3)}} is the proposed approach that uses both the anisotropic objective and the $\operatorname{conv}(SO(3))$ constraints. The first column provides with the information about the dataset: the number of cameras and the percentage of indefinite matrices $M_{ij}$.}
    \label{tab:main}
\end{table*}

\section{Conclusions}
In this work, we propose to incorporate anisotropic costs in a certifiably optimal rotation averaging framework. We demonstrate how existing solvers designed for chordal distances fail to provide sensible solutions for the new objective as these ignore the determinant constraint. Our new SDP formulation enforces $\operatorname{conv}(SO(3))$, and our empirical evaluation shows that this relaxation is able to recover global minima. We evaluated the new approach on both synthetic and real datasets and obtained more accurate reconstructions than isotropic methods in most cases, confirming recent observations based on local optimization. We believe that our work serves as a good entry point for further investigating the anisotropic rotation averaging setting.
This work focuses mainly on modeling aspects and understanding the shortcomings of existing relaxations. We aim to leverage standard SDP solvers; however, the increased strength of our SDP formulation often comes at the cost of more expensive optimization.
Designing a dedicated efficient algorithm is an important future direction.

\vfill

{
    \small
    \bibliographystyle{ieeenat_fullname}
    \bibliography{main}
}

\setcounter{equation}{0}
\setcounter{table}{0}
\setcounter{figure}{0}
\renewcommand{\theequation}{\thesection.\arabic{equation}}
\renewcommand\thefigure{\thesection.\arabic{figure}}
\renewcommand\thetable{\thesection.\arabic{table}}
\clearpage
\setcounter{page}{1}
\maketitlesupplementary
\appendix

\section{First-Order Uncertainty Propagation}

We leverage Laplace's approximation, which allows us to identify the Hessian of a non-linear least-squares minimization task as the precision matrix of the posterior $p(\tilde{\w}_{ij}|\text{image $i$ and $j$})$ (up to a positive scale).
In the following we apply first-order uncertainty propagation to move from the minimal axis-angle parametrization $\w\in\mathbb{R}^3$ to a direct parametrization of rotation matrices $R_{ij}\in\mathbb{R}^{3\times 3}$.

For $\Delta\w \approx \v 0$ we have
$\exp([\Delta\w]_\times) \approx \mI + [\Delta\w]_\times$. The
Jacobian of
the map $\Delta\w\to \vect(\mI + [\Delta\w]_\times)$ is the 9x3 matrix
\begin{align}
   \m J =
   \left( \begin{smallmatrix}
     0 & 0 & 0 \\
     0 & 0 & 1 \\
     0 & -1 & 0 \\
     0 & 0 & -1 \\
     0 & 0 & 0 \\
     1 & 0 & 0 \\
     0 & 1 & 0 \\
     -1 & 0 & 0 \\
     0 & 0 & 0
   \end{smallmatrix} \right) \nonumber.
\end{align}
Therefore the Gaussian transformed by the mapping $\Delta\w\to \vect(\mI + [\Delta\w]_\times)$ has covariance matrix
\begin{align}
  \Sigma_{[\Delta\w]_\times} = \m J \Sigma_{\Delta\w} \m J\tr = \m J \m H^{-1} \m J\tr.
\end{align}
Since $\m J$ has rank~3, the above relation only fixes 6~out of the 45~degrees
of freedom in $\Sigma_{[\Delta\w]_\times}$, hence there are 39 d.o.f.\
available in $\Sigma_{[\Delta\w]_\times}$ to achieve the following properties:
(i) $\Sigma_{[\Delta\w]_\times}$ is invertible and (ii) the maximum-likelihood objective
\begin{align}
\begin{split}
  &\vect(R-\tilde R)\tr \m M \vect(R-\tilde R) \\
  &\doteq \vect(R)\tr \m M \vect(R) - 2 \vect(\tilde R)\tr \m M \vect(R)
\end{split}
\end{align}
is \emph{linear} in $R$ and matches $\Delta\w\tr \m H \Delta\w$ to first
order for a suitable matrix $\m M$ to be determined. We consider
\begin{align}
\begin{split}
  \Delta\w\tr \m H \Delta\w &\stackrel{!}= \Tr([\Delta\w]_\times\tr \m M [\Delta\w]_\times) \\
  {} &= \vect([\Delta\w]_\times)\tr \vect(\m M [\Delta\w]_\times)  \\
  {} &= (\m J \Delta\w)\tr \vect(\m M [\Delta\w]_\times) \\
  {} &= \Delta\w\tr \m J\tr (\mI \otimes \m M) \vect([\Delta\w]_\times)  \\
  {} &= \Delta\w\tr \m J\tr (\mI \otimes \m M) \m J \Delta\w
\end{split}
\end{align}
for all $\Delta\w$, which implies
\begin{align}
  \m J\tr (\mI \otimes \m M) \m J = \m J\tr \begin{pmatrix} \m M & & \\ & \m M & \\ & & \m M \end{pmatrix} \m J  = \m H .
\end{align}
By e.g.\ using a CAS the relations between $\m M$ and $\m H$ can be derived as
\begin{align}
  \label{eq:M_H_conditions}
  \m M_{11} + \m M_{22} = \m H_{33} & & \m M_{11} + \m M_{33} = \m H_{22} & & \m M_{22} + \m M_{33} = \m H_{11}
\end{align}
and $\m M_{ij} = -\m H_{ij}$ for $i\ne j$. These relations can be more
compactly written as
\begin{align}
  \m H = \Tr(\m M) \mI - \m H
\end{align}
in accordance with Section~4 of the main paper. We relate
$[\Delta\w]_\times$ and $R$ via
$[\Delta\w]_\times \approx R\tilde R\tr - \mI$ and therefore obtain
\begin{align}
  \Tr([\Delta\w]_\times\tr &\m M [\Delta\w]_\times) \approx \Tr\!\Paren{ (R\tilde R\tr - \mI)\tr \m M (R\tilde R\tr - \mI) } \nonumber \\
  {} &= \Tr(\tilde R R\tr \m M R\tilde R\tr) - 2 \Tr(\m M R\tilde R\tr) + \Tr(\m M) \nonumber \\
  {} &= \Tr(\m M R\tilde R\tr \tilde R R\tr) - 2 \Tr(\tilde R\tr\m M R) + \Tr(\m M) \nonumber \\
  {} &= 2\Tr(\m M) - 2 \Tr(\tilde R\tr\m M R) \nonumber \\
  {} &= 2\Tr(\m M) - 2 \dotprod{\m M \tilde R}{R}_F,
\end{align}
which is equivalent to the cost (9) in the main text.

Since $[\Delta\w]_\times$ is (skew-)symmetric, a seemingly different solution $\m M'$ can be obtained by factoring
$\Tr([\Delta\w]_\times\tr \m M [\Delta\w]_\times)$ differently,
\begin{align}
\begin{split}
  \Delta\w\tr \m H \Delta\w &\stackrel{!}= \Tr([\Delta\w]_\times\tr \m M' [\Delta\w]_\times) \\
  {} &= \vect(\m M'[\Delta\w]_\times)\tr \vect([\Delta\w]_\times) \\
  {} &= \vect(\m M'[\Delta\w]_\times)\tr \m J \Delta\w \\
  {} &= \Paren{ (\m M' \otimes \mI) \vect([\Delta\w]_\times)} \tr \m J \Delta\w \\
  {} &= \Delta\w\tr \m J\tr (\m M' \otimes \mI) \m J \Delta\w
  \end{split}
\end{align}
for all $\Delta\w$, leading to the condition
$\m J\tr (\m M' \otimes \mI) \m J = \m H$. Using a CAS it can be seen that
these conditions on $\m M'$ are the same as for $\m M$
in~\eqref{eq:M_H_conditions}, and therefore $\m M=\m M'$.

\section{Parameterization of $R$}
There are different ways of parametrizing rotations. In the main paper we use the $R=e^{[\Delta \w]_\times}\tilde{R}$ which lead to the objective of the form $- \langle M \tilde R, R\rangle$. In this section, we investigate the effects of switching to other possible parameterizations. In particular, we start by letting $R = e^{[\w]_\times}=e^{[\tilde\w + \Delta\w]_\times}$ and look at its approximations.

We first notice that we can expect $R \tilde R\tr \approx \mI$ as $\tilde R$ is a noisy realization of $R$, therefore $R$ and $\tilde R\tr$ approximately commute and
\begin{align}
\begin{split}
e^{[\Delta\w]_\times} &= e^{[\w-\tilde\w]_\times} = e^{-\alpha[\tilde\w]_\times + [\w]_\times - (1-\alpha)[\tilde\w]_\times} \\
&\approx e^{-\alpha[\tilde\w]_\times} e^{[\w]_\times} e^{-(1-\alpha)[\tilde\w]_\times} \\
&= (\tilde R\tr)^\alpha R (\tilde R\tr)^{1-\alpha}
\end{split}
\label{eq:exp_dw_approximation}
\end{align}
for any $\alpha\in[0,1]$,
i.e.
\begin{align}
R \approx \tilde R^{\alpha} e^{[\Delta\w]_\times} \tilde R^{(1-\alpha)}.
\end{align}
The Hessian induced by the two-view optimization can then be computed in accordance with this mapping. The first-order Taylor expansion of \eqref{eq:exp_dw_approximation} results in
\begin{align}
[\Delta\w]_\times
&\approx (\tilde R\tr)^\alpha R (\tilde R\tr)^{1-\alpha} - \mI.
\label{eq:dw_approximation}
\end{align}
Plugging this into (5) gives the corresponding linear cost
\begin{align}
- \langle \tilde R^{\alpha} M \tilde R^{(1-\alpha)}, R\rangle.
\end{align}
The natural choices for $\alpha$ are $0$, $1/2$ and $1$. Setting $\alpha = 0$ yields the formulation employed in the main paper. We found that for the other values of $\alpha$ the solution of anisotropic rotation averaging is the same. However, when using the Hessian matrix computed from the Jacobian of the initial parameterization, $R = e^{[\tilde\w + \Delta\w]_\times}$, the results vary when optimizing the anisotropic cost for different values of $\alpha$. In our synthetic experiments the best solution is often obtained with $\alpha = 1/2$, however, this setting does not outperform the proposed formulation overall.

\section{Spectral method for anisotropic costs}

We recall that our main objective is
\begin{align}
    \sum_{i,j} \dotprod{M_{ij} \tilde R_{ij}}{R_j R_i\tr} .
\end{align}
Let us assume the noisefree setting, i.e.\ $\tilde R_{ij} = R_i R_j\tr$. The cost matrix $\m N$ is then given by
\begin{align}
    \m N = \begin{pmatrix} & \vdots & \\ \cdots & M_{ij} R_j R_i\tr & \cdots \\ & \vdots & \end{pmatrix}.
\end{align}
Consequently,
\begin{align}
\begin{split}
    \m N \m R &= \begin{pmatrix} \vdots \\ \sum_i M_{ij} R_j R_i\tr R_i \\ \vdots \end{pmatrix}
    = \begin{pmatrix} \vdots \\ \sum_i M_{ij} R_j \\ \vdots \end{pmatrix} \\
    &= \underbrace{\begin{pmatrix} \ddots & & \\ & \sum_i M_{ij} & \\ & & \ddots \end{pmatrix}}_{=: \m D} \m R.
\end{split}
\end{align}
Hence, $\m N \m R = \m D \m R$ or
\begin{align}
    \m D^{-1} \m N \m R = \m R,
\end{align}
and $\m R$ can be extracted as the eigenspace corresponding to the eigenvalue of~1. Since $\m N$ does not necessarily have rank-3 (even in the noise-free case), we obtain $\m R$ as the right singular vectors corresponding to the three smallest singular values of $\m N-\mI$.
We include results for this method in Table \ref{tab:mahalanobis_errors} introduced in the next section.

\section{Experimental Details}
We provide complementing results on synthetic and real datasets below.

\subsection{Synthetic experiments}
\begin{figure}[!b]
\begin{center}
\includegraphics[align=c,width=0.04\textwidth]{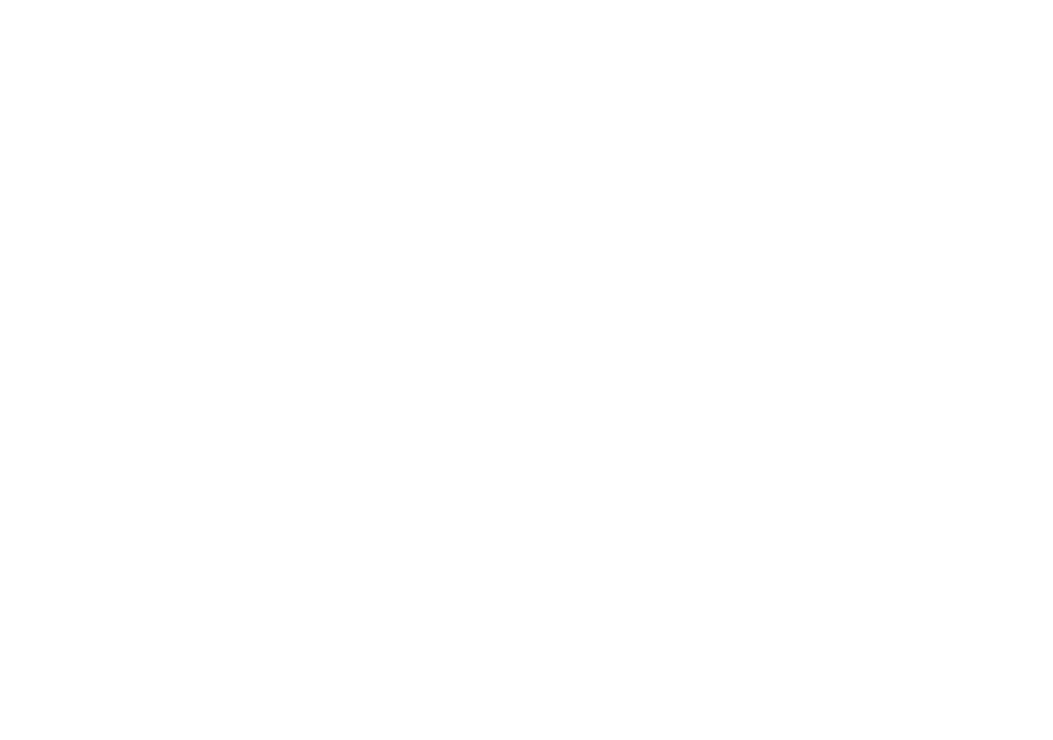}
\includegraphics[align=c,width=0.15\textwidth]{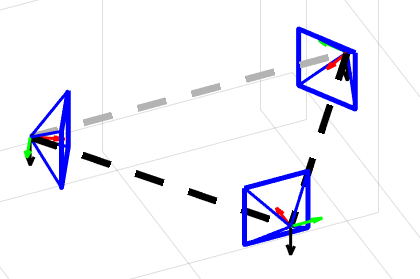}
\includegraphics[align=c,width=0.04\textwidth]{figs/blank.png}
\includegraphics[align=c,width=0.23\textwidth]{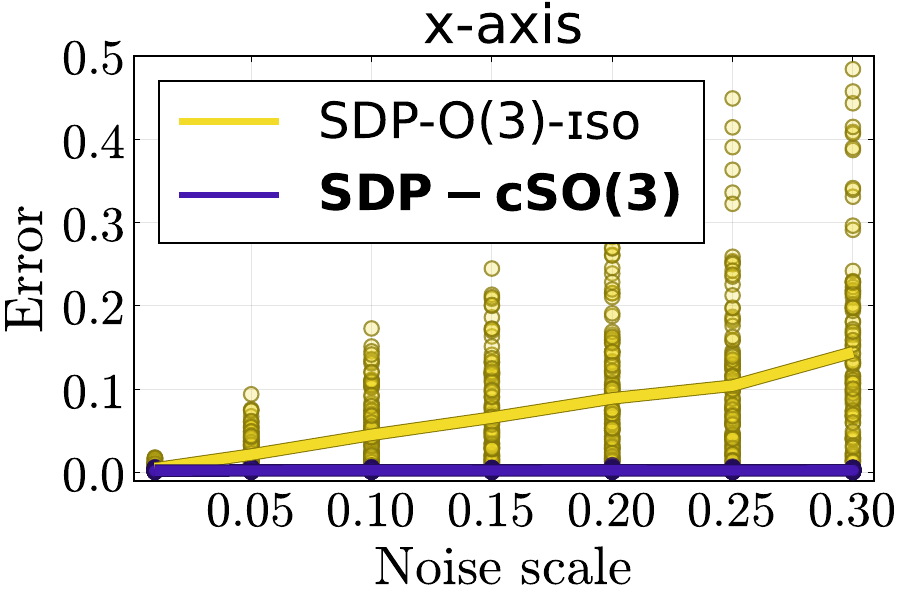}
\includegraphics[align=c,width=0.23\textwidth]{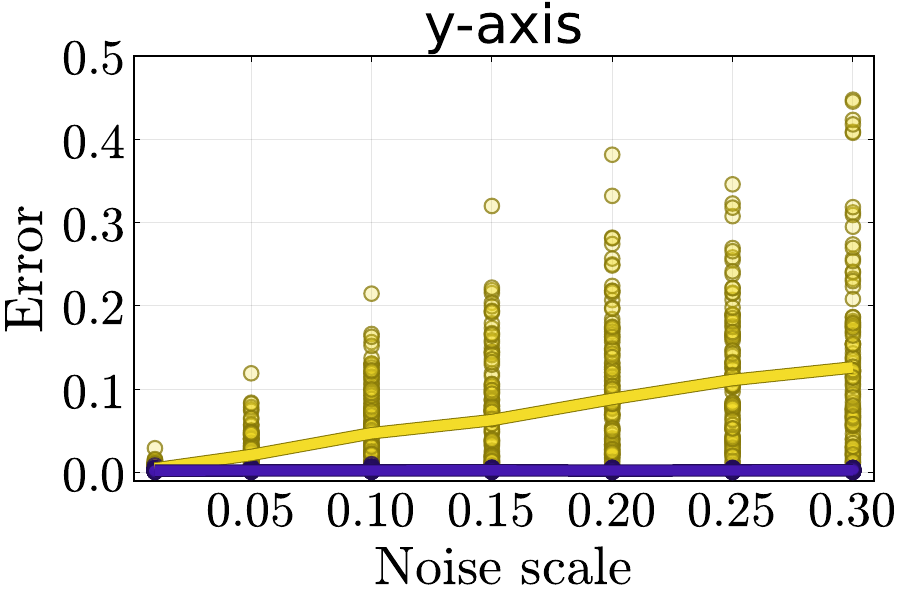}
\includegraphics[align=c,width=0.23\textwidth]{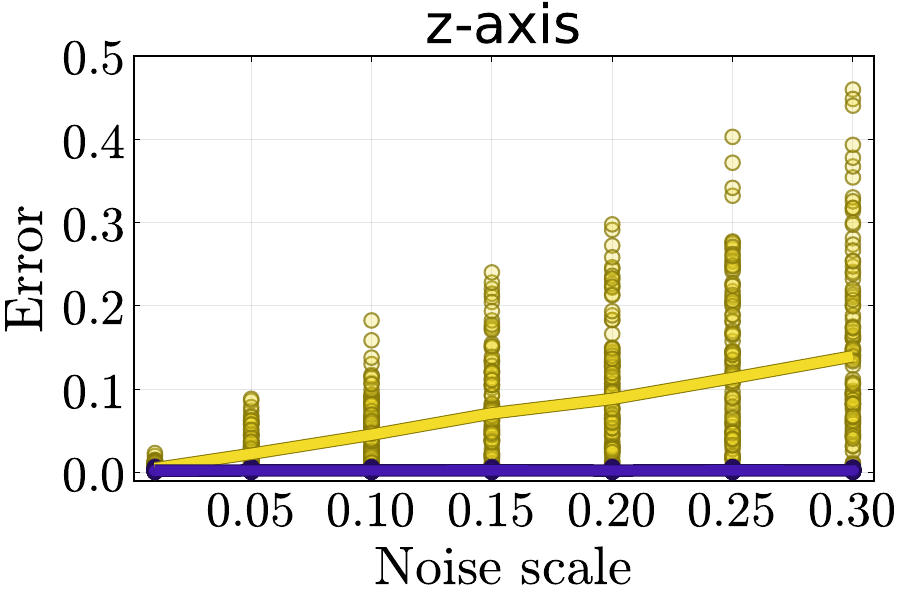}
\end{center}\vspace{-5mm}
\caption{The configuration of the cameras (top left). Rotation error wrt. ground truth $\sqrt{\sum_i \|R_i - R_i^*\|_F^2}$ (top right, bottom) for the increasing noise scale around one of the axes.}
\label{fig:noise_effects}
\end{figure}
We set up a synthetic graph with three cameras as shown in Figure \ref{fig:noise_effects}. The estimated relative rotations corresponding to the black dashed edges are ``certain'', i.e. $\Delta \w_{ij}$ are drawn from $\mathcal{N}(0,\varepsilon \mI)$, where $\varepsilon=0.001$. The estimated relative rotation of the gray edge has varying uncertainty around one of the three axes, e.g., for the x- (red) axis, the noise covariance is $\diag(\sigma,\varepsilon,\varepsilon)$, where $\sigma$ varies from $0.01$ to $0.3$. The effects on the error are shown in Figure \ref{fig:noise_effects}. The proposed method relies on certain relative rotations and gives an accurate solution, while the standard isotropic approach is negatively affected by the single noisy relative rotation.

\begin{figure}[!b]
\begin{center}
\begin{subfigure}[t]{0.235\textwidth}
\includegraphics[height=26mm]{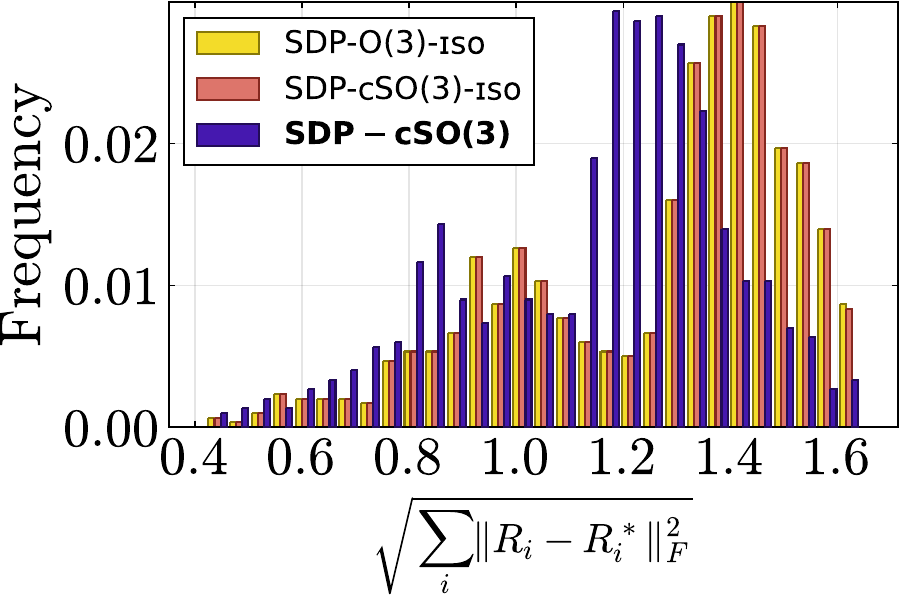}\\[4pt]
\includegraphics[height=27mm]{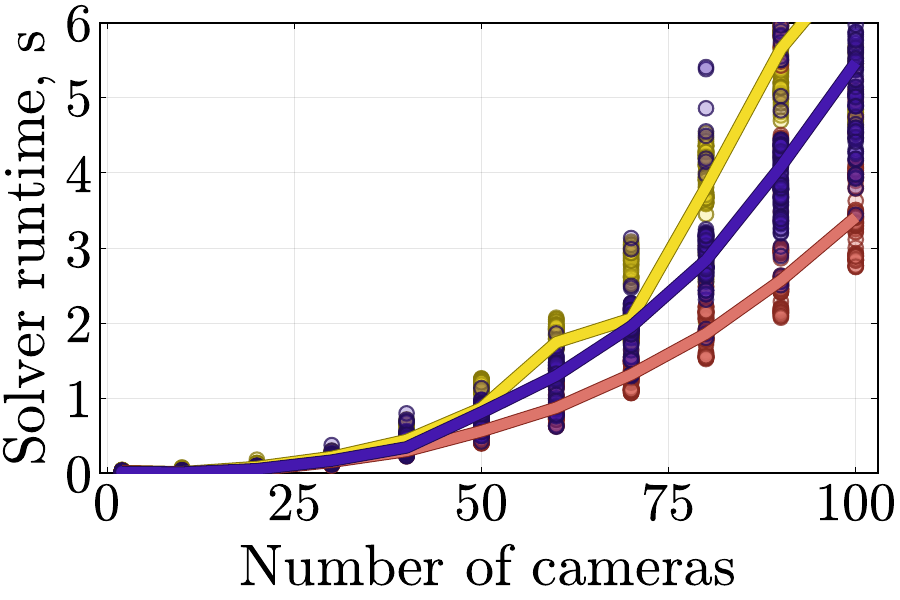}
\caption{$p = 0.1$}
\end{subfigure}
\begin{subfigure}[t]{0.235\textwidth}
\includegraphics[height=26mm]{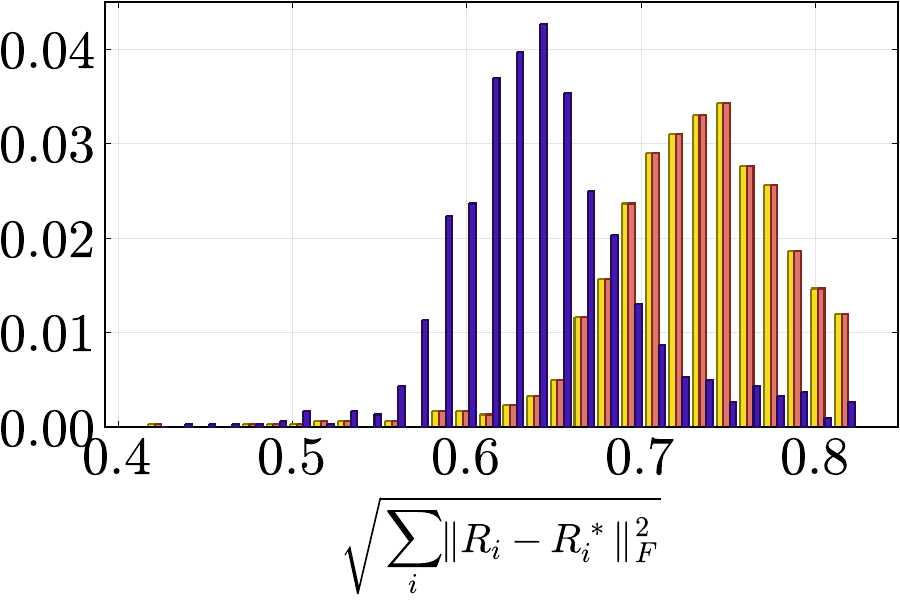}\\[4pt]
\includegraphics[height=27mm]{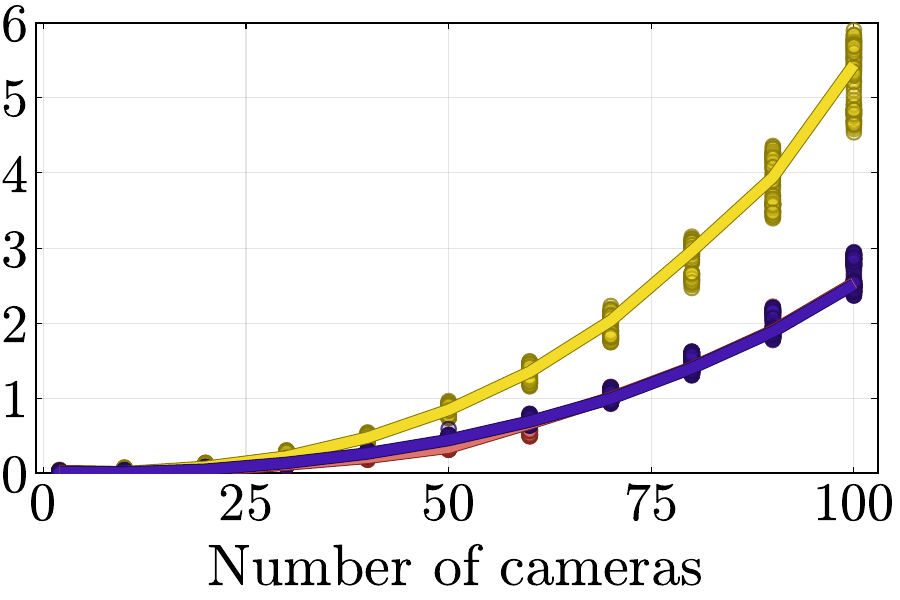}
\caption{$p = 0.3$}
\end{subfigure}\\[7pt]
\begin{subfigure}[t]{0.235\textwidth}
\includegraphics[height=26mm]{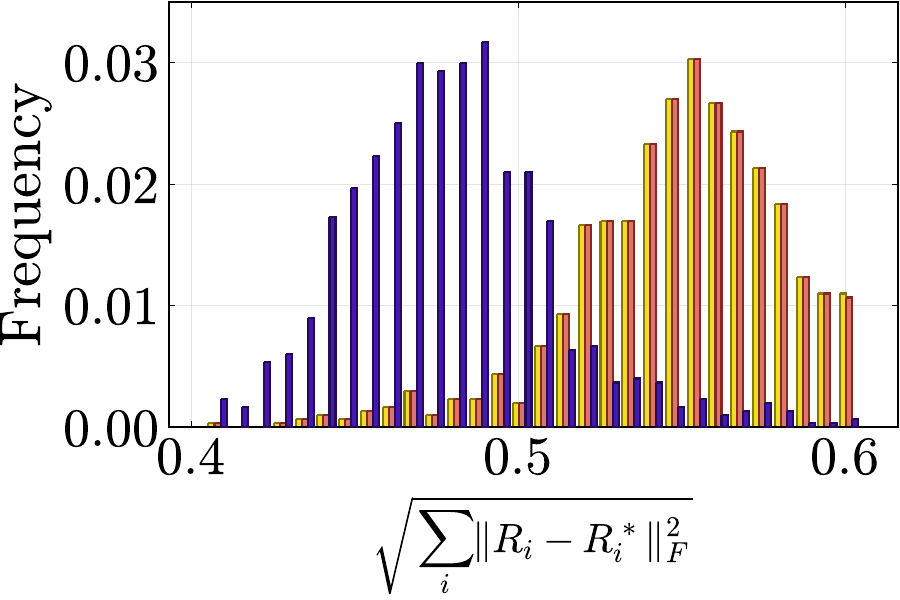}\\[4pt]
\includegraphics[height=27mm]{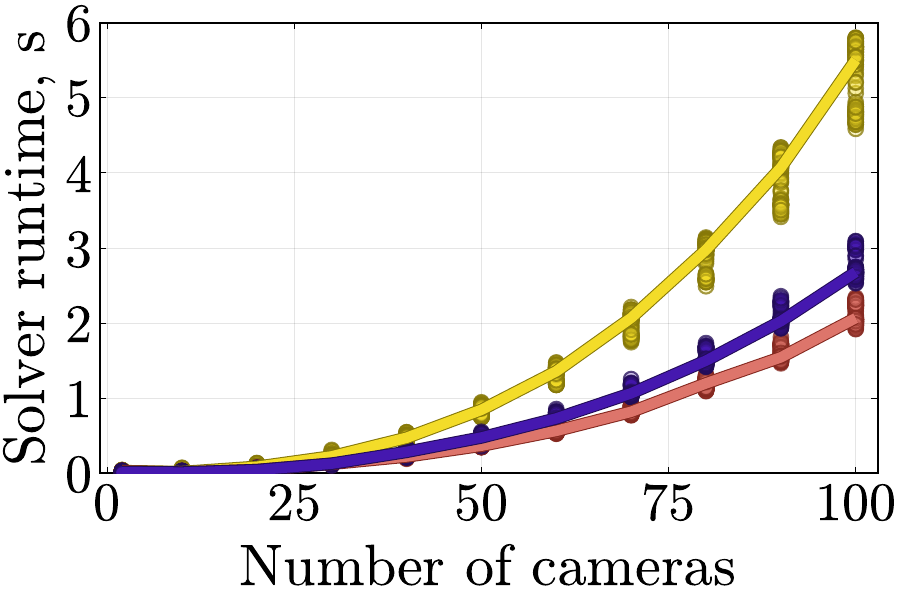}
\caption{$p = 0.5$}
\end{subfigure}
\begin{subfigure}[t]{0.235\textwidth}
\includegraphics[height=26mm]{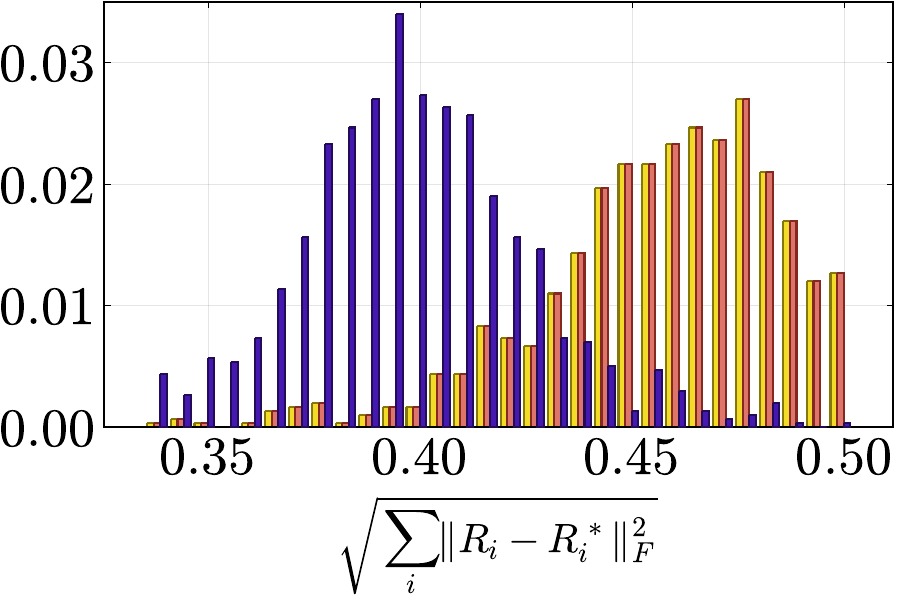}\\[4pt]
\includegraphics[height=27mm]{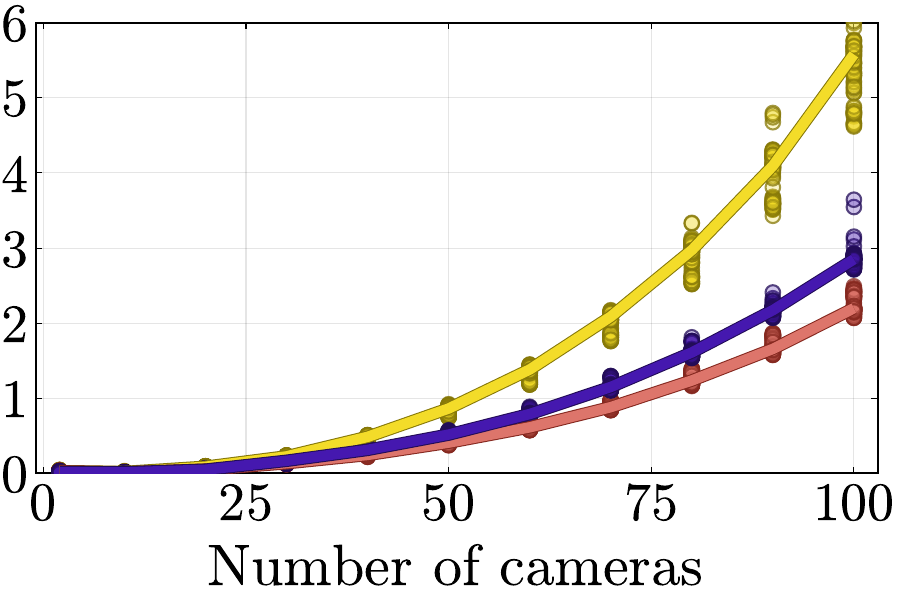}
\caption{$p = 0.7$}
\end{subfigure}\\[7pt]
\begin{subfigure}[t]{0.235\textwidth}
\includegraphics[height=26mm]{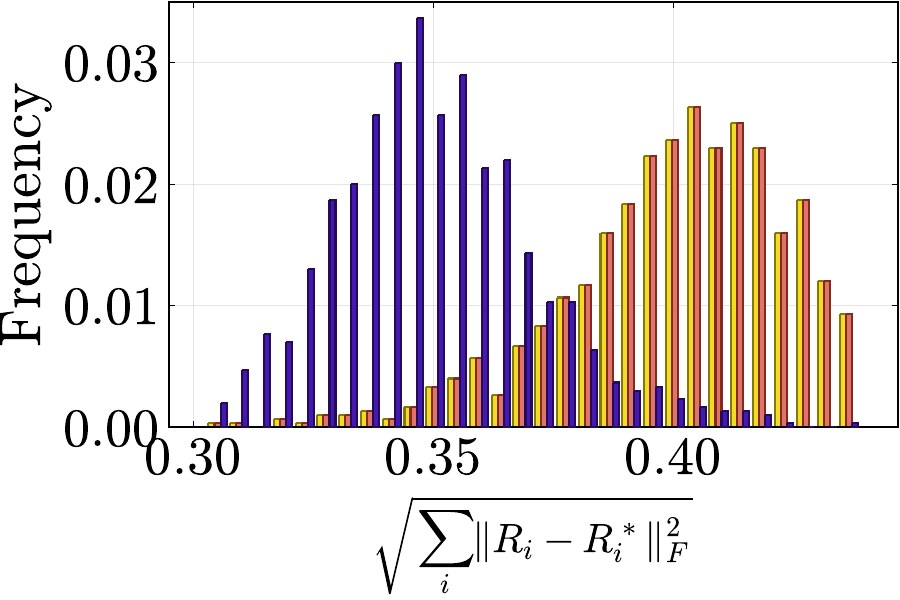}\\[4pt]
\includegraphics[height=27mm]{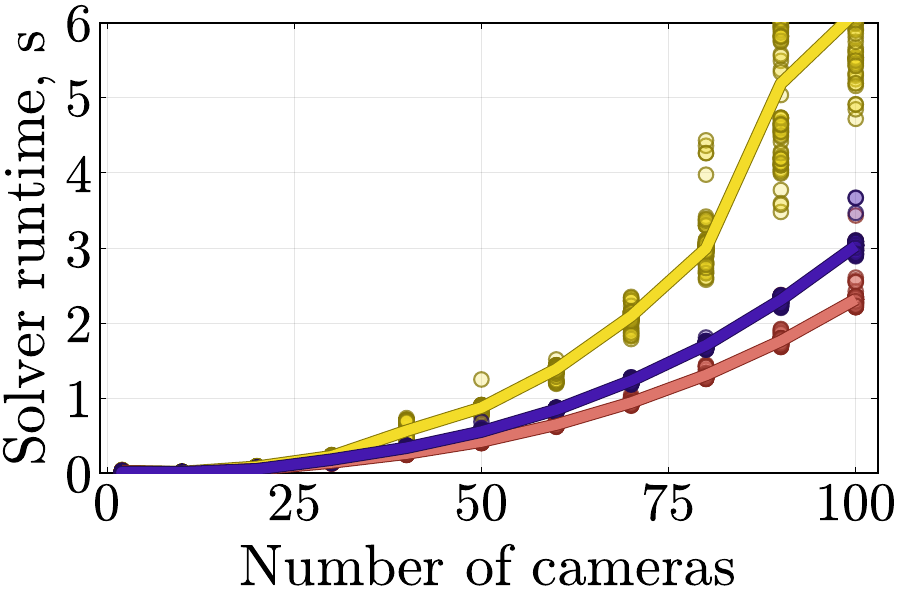}
\caption{$p = 0.9$}
\end{subfigure}
\begin{subfigure}[t]{0.235\textwidth}
\includegraphics[height=26mm]{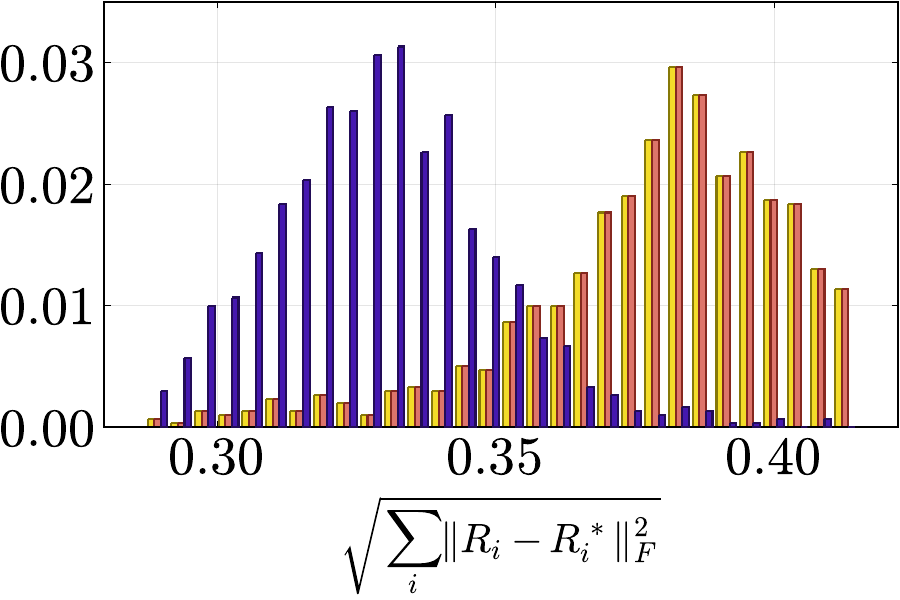}\\[4pt]
\includegraphics[height=27mm]{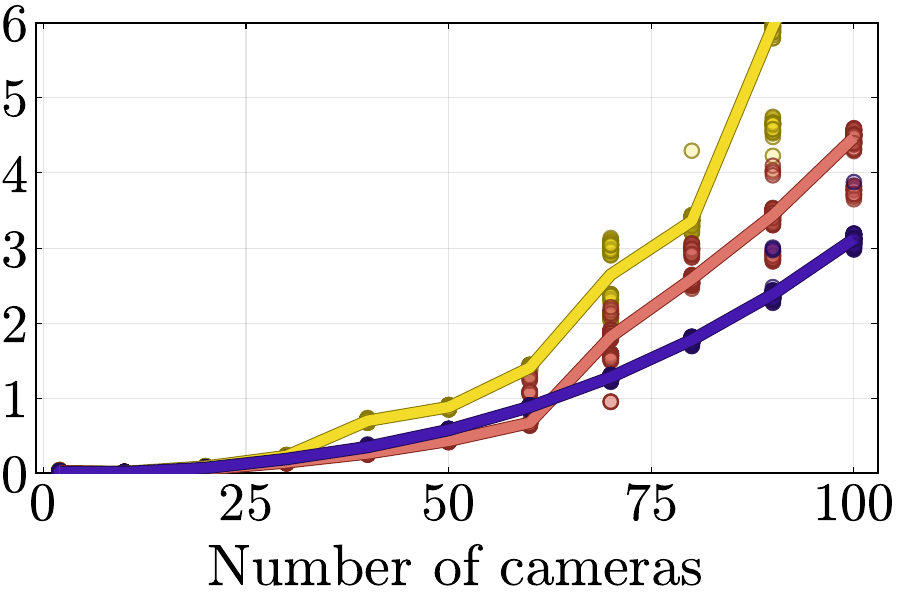}
\caption{$p = 1.0$}
\end{subfigure}
\\
\end{center}\vspace{-5mm}
\caption{Histograms of rotation errors wrt. ground truth (rows 1,3,5). Corresponding solver runtime (s) wrt. the increasing number of cameras (rows 2,4,6). Results are shown for different fractions $p$ of observed relative rotations.}
\label{fig:synthetic_study2}
\end{figure}

We also present complementing results of the synthetic experiments of Sec 5.1 in the paper. Figure \ref{fig:synthetic_study2} shows the rotation error histograms and the dependency of the runtime on the number of cameras for the other fractions of the observed relative rotations. These results are in line with the analysis provided in the paper --- using the proposed objective leads to lower errors and using the proposed constraints speeds up the employed SDP solver.

\subsection{Real experiments}
To account for the estimated uncertainties in the evaluation, we compared the methods using the Mahalanobis distance between the axis-angle vectors of the estimated and ground truth rotations. The axis-angle vector $\w_i$ is distributed according to $\mathcal{N}(\w_i^*, H_i)$, where $\w_i^*$ is the ground truth axis-angle vector and $H_i = \sum_{j: \{ij\} \text{ was observed}} H_{ij}$, assuming that all other cameras are fixed. Let $\Delta\w_i^- = \w_i - \w_i^*$ and $\Delta\w_i^+ =\w_i + \w_i^*$, then the Mahalanobis error is
\begin{equation}
\sqrt{\sum_i  \min\{\Delta\w_i^{-\top} H_i \Delta\w_i^-, \Delta\w_i^{+\top} H_i \Delta\w_i^+\}},
\end{equation}
where the minimization is done to account for the sign ambiguity in the axis-angle representation. We also present the RMS angular errors for a better geometric interpretation. As shown in Table \ref{tab:mahalanobis_errors}, in many cases, the proposed method leads to a much lower error. 
\begin{table}[!h]
\small
\begin{center}
\begin{tabular}{llHcHc}
Dataset  &Method & &Mahal. err.& &Angl. err.\\
\hline
\multirow{2}{*}{LU Sphinx}
& \textsc{SDP-O(3)-iso} & 3 & 0.388 & 2 s & 0.46\\
& Spectral &  & 0.420 & & 1.17\\
& \textbf{\textsc{SDP-cSO(3)}} & 3 & \textbf{0.207} & 9 s & \textbf{0.36}\\

\hline
\multirow{2}{*}{Round Church}
& \textsc{SDP-O(3)-iso} & 3 & 0.631 & 6 s& 0.59\\
& Spectral &  &  0.437 & &  1.20\\
& \textbf{\textsc{SDP-cSO(3)}} & 3 & \textbf{0.368} & 274 s & \textbf{0.54}\\

\hline
\multirow{2}{*}{UWO}
& \textsc{SDP-O(3)-iso} & 3 & 1.481 & 14 s & 1.19\\
& Spectral &  &  6.125 & &  7.07\\
& \textbf{\textsc{SDP-cSO(3)}} & 3 & \textbf{0.727} & 13 s & \textbf{0.86}\\

\hline 
\multirow{2}{*}{Tsar Nikolai I}
& \textsc{SDP-O(3)-iso} & 3 & 0.687 & 7 s & 0.48\\
& Spectral &  &  0.344 & &  0.71\\
& \textbf{\textsc{SDP-cSO(3)}} & 3 & \textbf{0.188} & 13 s & \textbf{0.22}\\

\hline 
\multirow{2}{*}{Vercingetorix}
& \textsc{SDP-O(3)-iso} & 3 & 0.431 & 2 s & 1.53\\
& Spectral &  & 30.970 & & 86.94\\
& \textbf{\textsc{SDP-cSO(3)}} & 3 & \textbf{0.423} & 21 s & \textbf{1.42}\\

\hline 
\multirow{2}{*}{Eglise Du Dome}
& \textsc{SDP-O(3)-iso} & 3 & 0.224 & 4 s & 0.24\\
& Spectral &  &  \textbf{0.119} & &  0.22\\
& \textbf{\textsc{SDP-cSO(3)}} & 3 &  0.188 & 51 s & \textbf{0.21}\\

\hline 
\multirow{2}{*}{King’s College}
& \textsc{SDP-O(3)-iso} & 3 & 0.229 & 4 s & 0.76\\
& Spectral &  &  0.251 & &  1.00\\
& \textbf{\textsc{SDP-cSO(3)}} & 3 & \textbf{0.130} & 1530 s & \textbf{0.37}\\

\hline 
\multirow{2}{*}{Kronan}
& \textsc{SDP-O(3)-iso} & 3 & \textbf{0.738} & 18 s & \textbf{0.76}\\
& Spectral &  &  2.622 & &  4.36\\
& \textbf{\textsc{SDP-cSO(3)}} & 3 & 1.111 & 2913 s & 1.38\\

\hline 
\multirow{2}{*}{Alcatraz}
& \textsc{SDP-O(3)-iso} & 3 & 1.333 & 19 s & 0.62\\
& Spectral &  &  \textbf{0.667} & &  0.80\\
& \textbf{\textsc{SDP-cSO(3)}} & 3 & 1.011 & 313 s & \textbf{0.45}\\

\hline 
\multirow{2}{*}{Museum Barcelona}
& \textsc{SDP-O(3)-iso} & 3 & 2.710 & 22 s & 0.79\\
& Spectral &  & 16.588 & &  7.35\\
& \textbf{\textsc{SDP-cSO(3)}} & 3 & \textbf{1.216} & 51 s & \textbf{0.46}\\

\hline 
\multirow{2}{*}{Temple Singapore}
& \textsc{SDP-O(3)-iso} & 3 & 2.420 & 31 s & 0.86\\
& Spectral &  &  \textbf{0.719} & &  \textbf{0.46}\\
& \textbf{\textsc{SDP-cSO(3)}} & 3 & 1.076 & 5861 s & 0.55\\
\hline\end{tabular}
\end{center}
\caption{Mahalanobis distance between the axis-angle vectors of the estimated and ground truth rotations and RMS angular errors (degrees) evaluated on the real datasets. }
\label{tab:mahalanobis_errors}
\end{table}

\end{document}